\newcommand{\E}{\mathbb{E}}
\newcommand{\TV}{\operatorname{TV}}
\def\revdraft{0}
\begin{document}

\markboth{S. Tang and H. Yu}{Total Variation Denoising for Urban Traffic Analysis}
\title{Application of Bounded Total Variation Denoising in\\ Urban Traffic Analysis}


\author[S.~Tang and H.~Yu]{Shanshan Tang\affil{1} and Haijun Yu\affil{2}\comma\corrauth}
\address{\affilnum{1}	School of Mathematical Sciences, University of Chinese Academy of Sciences;
	LSEC, Institute of Computational Mathematics and Scientific/Engineering Computing,
	Academy of Mathematics and Systems Science, Beijing 100190, China\\
\affilnum{2}NCMIS $\&$ LSEC, Institute of Computational Mathematics and Scientific/Engineering Computing,
	Academy of Mathematics and Systems Science, 
	Beijing 100190, China;\\
	School of Mathematical Sciences, University of Chinese Academy of Sciences
}
%
%
\emails{{\tt hyu@lsec.cc.ac.cn	} (H.Yu),{\tt tangshanshan@lsec.cc.ac.cn} (S.Tang)}
%
\begin{abstract}
  While it is believed that denoising is not always
  necessary in many big data applications, we show in this
  paper that denoising is helpful in urban traffic analysis
  by applying the method of bounded total variation
  denoising to the urban road traffic prediction and
  clustering problem.  We propose two easy-to-implement
  methods to estimate the parameter noise strength in the
  denoising algorithm, and apply the denoising algorithm to
  GPS-based traffic data from Beijing taxi system. For the
  traffic prediction problem, numerical experiments show
  that the predicting accuracy is improved by applying the
  proposed bounded total variation denoising algorithm. {For
    the} clustering problem, {we apply} a recently developed
  {parameter-free} clustering analysis method for more than
  one hundred urban road segments in Beijing based on their
  velocity profiles. {Much} better clustering results are
  obtained after denoising.
\end{abstract}

\keywords{Urban traffic prediction, bounded variation denoising,  noise estimate, cluster analysis}
\ams{65M10, 78A48}

\maketitle


\section{Introduction} 
Transportation has become an important part in our everyday
life, which brings many kinds of traffic problems at the
same time. For example, big cities like Beijing have a trend
to be more and more congested as the enormous growing of
private cars. Many researchers have tried to analysis and
alleviate the traffic congestion problem with different
methods, especially using vehicle information from Global
Positioning System (GPS) as more and more {GPS} data are
available.  {GPS data has been used for} exploring the cause
and propagation of traffic
jams~\cite{Liu2011Discovering,Wang2013Visual}, mining the
fastest driving route~\cite{Yuan2010T}, trying clustering
methods for trajectories to characterize patterns of
traffic~\cite{Kim2015Spatial,Palma2008A}, estimating the
volume of citywide transportation~\cite{Zhan2017Citywide},
predicting the taxi destinations~\cite{Br2015Artificial} and
forecasting the
velocity~\cite{Sadek2004Multi,Wanli2011Real}. All of these
works are based on trajectories and have utilized temporal,
spatial or spatio-temporal properties. In this paper, we are
interested in velocity prediction. As we all know if the
speed is much lower than the average velocity {of vehicles
  on} a road for a relatively long time, then there is a
high possibility that congestion is formed. So {accurate
  traffic velocity forecast is the base} for routine
planning, congestion control and analysis, etc.. However,
such a traffic forecast is not an easy task in practice as a
result of both the limit of methods and data volume and
{quality}. Urban traffic is a quite complex system, leading
that data collected contain noise. Noises may come from two
aspects. On the one hand, due to the number of cars
appearing on an urban road in a short time interval is very
limited, one needs to consider the average number of cars
and the average velocity on a given road as stochastic
processes. On the other hand, measurement error and
processing error(e.g. the GPS mapping error) may be
introduced in the data collection and post-processing
procedures. Correspondingly, there are two ways to deal with
noisy data. One is to treat noise as real, and develop new
methods to improve accuracy(see, e.g.,
\cite{Sadek2004Multi,Wanli2011Real,Br2015Artificial,Kim2015Spatial,Palma2008A}). The
other is to first reduce noise in the collected
data~\cite{Zhu2013A}, and then use the denoised data to
forecast on the base of existing
methods~\cite{Xiao2003Fuzzy,Zheng2016Traffic}.

In this paper, we prefer to the latter way mentioned
above. The reasons are as follows. First, data can be used
in urban traffic analysis without expensive cost are often
{sparse comparing to the complexity of the problem}. So we
need to make the limited data effective rather than using
the noisy data directly.  Second, although some existing methods have some tolerance with noise,
like deep neural
network~\cite{Yarotsky2017Error,Liang2017Why}. The real
performance of the network depends on many factors and we
believe these methods will perform better with
proper-denoised data under the same condition.  {In
  fact, some neural network methods are not robust to noise
  \cite{tang2010deep}, which causes overfitting.  Many
  methods are proposed to prevent overfitting, like early
  stopping, regularization, soft weight
  sharing\cite{nowlan1992simplifying}, denoising
  autoencoders\cite{vincent2008extracting,
    vincent2010stacked} and dropout\cite{maaten2013learning,
    srivastava2014dropout}.}  Third, by using dedicated {and
  robust} denoising algorithms, shallow neural networks {and
  other simple models} using less data can give fairly good
urban traffic prediction, which is favorable in real-time
and online applications comparing to deep neural networks
due to {their} efficiency.

There exist many useful methods developed for image and
video denoising, such as \cite{Rudin1992Nonlinear,
  You.Kaveh2000, M2005Fast, Bao2005Image,
  Baraniuk2007Compressive, Yin.etal2008, Dabov2008Video,
  Cai.etal2009, Wu.Tai2010a, Liu2010A, Ji2010Robust,
  Pang.etal2011, Shi.etal2012, Zhang.etal2017b,
  Yang.etal2018b, wei2017learning,
    sciacchitano2017total,jiang2018A}. Some of these
methods have been applied to traffic analysis, e.g. the
wavelet method~\cite{Xiao2003Fuzzy,Zheng2011Applications}
and the compressive sensing
method~\cite{Zhu2013A,Xu2015The,Zheng2016Traffic}. These
methods work fairly well in some denoising applications, but
they are based on {particular} assumptions on noise and
specific basis functions. We want to {use} a method works
well with less assumptions on noise and basis functions
and the parameters of which are
easy to obtain.  Thus we apply bounded
total variation (BV) denoising
method~\cite{Rudin1992Nonlinear} to urban traffic velocity
data. It {has edge preserving
  property}~\cite{Strong2003Edge}, which is consistent with
the sudden jump of traffic speed. Our goal is to find the
reconstructed velocity in the function
space of bounded total variation and estimate the noise
strength during the process of denoising with less
assumptions on the type of noise. To this end, we
{propose} two efficient methods to estimate
the noise strength, which almost do not request any
assumptions on noise.

{The rest of the paper is organized as follows. In section 2}, we first review the method of BV denoising, then show how to apply the BV denoising method to traffic data by proposing two different ways to estimate the noise strength. {In section
 3,} we show the effectiveness of denoising {by applying the proposed methods to Beijing urban road {velocity data} derived from taxi GPS system.} {We end the paper with some concluding remarks in section 4.}

\section{The BV denoising algorithm for traffic velocity data}
In this section, we first {give a brief introduction} to the
method of BV denoising, and then show how to apply it to
traffic velocity data.

\subsection{BV denoising model}

Let $u_0(x)$, $x\in\Omega\subseteq \textbf{R}^d$ be the
original observed velocity vector and $u(x)$ be the denoised
velocity vector. Then $u_0(x)=u(x)+\xi(x)$, where $\xi(x)$
is the noise whose mean is 0. We first consider the BV
denoising method for continuous case and then discuss the
corresponding discrete case. {For the continuous case}, the
BV denoising method is to find the solution {$u$ by solving}
the following optimization problem~\cite{Rudin1992Nonlinear}
\begin{equation}
\label{eq:BVconti}
\min_{u(x)} \TV(u)
=
\min_{u(x)}\int_\Omega|\triangledown u(x)|dx
\end{equation} 
such that
\begin{equation}\label{eq:Constrconti}
\int_{\Omega}\!u(x)dx=\int_{\Omega}\!u_0(x)dx,
\qquad \frac12\!\int_{\Omega}\!\big(u(x)-u_0(x)\big)^2dx=\sigma^2,
\end{equation}
{where $\sigma$ is the noise strength parameter need to be specified.}
When we apply the BV denoising method to reduce the noise in
road velocity time series, $x$ is one dimensional, so we
only consider the case that $d=1$ below.  The first order
condition of the optimization problem satisfies
\eqref{eq:Constrconti} and following Euler-Lagrange
equation:
\begin{equation*}
\frac{d}{dx}\Big(\frac{u_x}{|u_x|}\Big)-\lambda_1-\lambda_2(u-u_0)=0, 
\qquad \frac{\partial u}{\partial n}\Bigr|_{\partial \Omega} =0, 
\end{equation*}
where $u_x=\frac{\partial u}{\partial x}$, $\lambda_1$ and
$\lambda_2$ are the corresponding Lagrange multipliers.  To
obtain the solution of the optimization problem
\eqref{eq:BVconti}, one can solve the following equations
corresponding to the gradient flow of the Lagrange function
with respect to $u$:
\begin{align}
& u_t=\frac{d}{dx}\Big(\frac{u_x}{|u_x|_{\epsilon}}\Big)-\lambda_2(u-u_0),\label{eq:LfunGradu}\\
& \frac{\partial u}{\partial n}\Bigr|_{\partial \Omega}=0,\label{eq:NeumanBC2}\\
& u(x,0) =u_0(x)+\sigma\phi, \label{eq:uIC}
\end{align}
where $\phi$ in \eqref{eq:uIC} satisfies $\E(\phi)=0$,
$\frac{1}{2}\int_{\Omega}\phi^2dx=1$ such that $u(x,0)$
satisfies the constraint \eqref{eq:Constrconti}, and
$|x|_{\epsilon}=|x|+\epsilon$ {with $\epsilon$ is a
  relatively small number} to avoid singularity and obtain
better numerical stability. In addition, when initial
condition \eqref{eq:uIC} is used, we can assure
$\lambda_1=0$ all the time. So we discard $\lambda_1$ and
only keep $\lambda_2${, which is rewritten as $\lambda$ for
  simplicity.}  {To} calculate $\lambda$, {we first}
multiply $u(x)-u_0(x)$ on both sides of
\eqref{eq:LfunGradu}, {then} integrate {the result} on
$\Omega$.  By the Neumann boundary condition and
\eqref{eq:Constrconti}, we obtain:
\begin{equation}
\lambda=\frac{1}{2\sigma^2}\int_{\Omega}\frac{u_x}{|u_x|_{\epsilon}}\cdot((u_0)_x-u_x)dx.
\end{equation}
$\lambda$ changes as $u_x$ changes with respect to $t$, and
converges when $t$ {goes to}
infinity~\cite{Rudin1992Nonlinear,Rosen1961The}.

{The} steady state solution of
\eqref{eq:LfunGradu}-\eqref{eq:NeumanBC2} {represents} the
clean data denoised from the observed data $u_0(x)$. Next,
we present numerical schemes to solve equations
\eqref{eq:LfunGradu}-\eqref{eq:NeumanBC2} in discrete case.

\subsection{The projected gradient descent algorithm}

The data we used are average traffic velocities on urban
roads derived from the taxi GPS records. If we have a
velocity record every other 5 minutes each day, then we have
288 records for each road every day. However, the velocity
records are often less than 288 for most roads because of
the limit of taxi numbers and recording
equipments. Therefore, we make nearest interpolation {for
  missing data} such that each road has 288 velocity records
each day. In this way, we set $N=288$, and let $x$ direction
correspond to direction of $N$ time slices. Define
\begin{equation}\label{eq:Def-sigma}
\frac{1}{2}\sum_{i=1}^{N}(u(x_i)-u_0(x_i))^2\cdot h=\sigma_d^2,
\end{equation}
where $h$ is the duration of time intervals{,} in which the
traffic velocity on road segments are {averaged}. For the
data we used, $h$ is 5 minutes. {$x_i=[(i-1)h, ih)$},
$i=1,2,\ldots,N$ are the time intervals in one day,
$u_0(x_i), u(x_i)$ are the observed and denoised velocity
for time slice $i$, $\sigma_d$ corresponds to the noise
strength in discrete case. While $\sigma$ and $\sigma_d$ are
different, we do not distinguish them in the following for
convenience.

We use projected gradient descent(PGD)
method~\cite{Rosen1961The} to find the numerical solution of
equations \eqref{eq:LfunGradu}-\eqref{eq:NeumanBC2} because
it is relatively {straightforward} and {good enough for our
  goal in this study}. It is guaranteed that the {PGD}
method converges~\cite{Antonin2009An}.  {We note that there
  exist several more advanced numerical algorithms for
  complicated denoise problems, such as the Bregman
  iterative algorithms \cite{Yin.etal2008, Cai.etal2009,
    Shi.etal2012} and augmented Lagrangian method
  \cite{Wu.Tai2010a, Wu.etal2011}, etc.}

Denote $u_i^n=u(x_i, t^n)$, where $n$ is the iteration
index, {$\Delta t^n =t^{n+1}-t^n$} is the step size of the
$n$-th iteration. The detailed projected gradient descent
bounded variation denoising (PGDBV) algorithm is described
as follows. 

{\bf Algorithm PGDBV:} The algorithm takes the observed
velocity series:
$\left\{u_0(x_i),\ i=1,\ldots,N\, \right\}$; the variance of
the noise: $\sigma$; the maximum iteration number:
$N_{iter}$; the relative tolerance of gradient decay:
$\delta$ as {\it input}.  The denoised velocity series:
$\left\{ u_i,\ i=1,\ldots,N \right\}$, and the total
variation of denoised data: $V$ are calculated by performing
the following steps.
\begin{enumerate}
\item Initialize velocity series:
  $u^0_i=u_0(x_i), i=1,\ldots,N$, and calculate the initial
  discrete total variation:
  \begin{equation}\label{eq:BVdisc}
	V^n = \TV(u^n) =\sum_{i=1}^{N-1} |u^n_{i+1} - u^n_i|,
	\end{equation}
    with $n=0$. {If $V^n=0${,} then set
      $u_i=u_i^0, i=1,\ldots, N$, $V=V^0$ and return.}
  \item For $n$ from $0$ to $N_{iter}-1$, do the following
    iterations
	\begin{enumerate}
    \item Update the Lagrange multiplier $\lambda^n$ as
      \begin{equation*}
		\lambda^n=\frac{{h}}{2\sigma^2}\sum_{i=1}^{N-1}\frac{\triangle^+u_i^n}{|\triangle^+u_i^n|_{\epsilon}}(\triangle^+(u_0)_i-\triangle^+u_i^n),
      \end{equation*}
      {where $\Delta^+u^n_i := u^n_{i+1} - u^n_i$.}
    \item Calculate the gradient of the $i$-th segment as
      \begin{align*}
		g_i^n&=-\Bigl( \frac{1}{h}\bigl( \frac{\triangle^+u_i^n}{|\triangle^+u_i^n|_{\epsilon}} - \frac{\triangle^+u_{i-1}^n}{|\triangle^+u_{i-1}^n|_{\epsilon}}\bigr) 
               - \lambda^n(u_i^n-u_0(x_i)) \Bigr), \quad \mbox{for}\ i=1\ldots N,
      \end{align*}
      where $\triangle^+u_0^n=0$, and
      $\triangle^+u_{{N}}^n=0$.
    \item Obtain step size $\triangle t^n$ with line search;
    \item Update the variable
      $u^{n+1}=u^n-\triangle t^ng^n$;
    \item Calculate the total variation $V^{n+1}$ using
      \eqref{eq:BVdisc} with $n$ replaced by $n+1$;
    \item If $\| g^n \|_{\infty}/V^0 \le \delta$,
      then set $u_i=u_i^{n+1},i=1,\ldots,N, \;V=V^{n+1}$ and
      return.
	\end{enumerate}
    \item Set $u_i=u_i^{N_{iter}},i=1,\ldots,N, \;V=V^{N_{iter}}$ and return.
\end{enumerate}	

\subsection{Estimate of the noise strength}
In Algorithm PGDBV, {input $\sigma$} is a characterization
of the noise strength. But we do not know its value a
priori. Now we propose two different ways to estimate
{$\sigma$}.

\textbf{Method 1}: Multi-resolution noise estimate.

In each day, every road has $N=288$ velocity records
corresponding to $N$ time slices. We want to estimate the
noise strength in three different resolutions: $N_1=N$,
$N_2=\frac{N}{2}$, $N_3=\frac{N}{4}$. Denote
$[N]= \{1,2,\cdots,N\}$. Let {
\begin{equation}\label{eq:vdef}
v_i^0=u_0(x_i),\; i\in[N_1]; \qquad
v_i^{j}=\frac{v^{j-1}_{2i}+v^{j-1}_{2i-1}}{2},\; i\in[N_{j+1}],\; j=1,2.
\end{equation}
}
Then the variation in three resolutions are:
	\begin{equation*}{
	 V_{j+1}=\sum_{i=1}^{N_{j+1}-1}|v^j_{i+1}-v^j_i|^2\cdot\frac{1}{2^j h},
	 \quad j=0,1,2.}
	\end{equation*}

    {The estimate of $\sigma$ together with an error bound
      is given in the following Theorem. The proof can be
      found in Appendix.}
\begin{theorem}\label{thm: Thm1}
  Assume $u_0(x_i)=u_i+\xi_i, i\in[N]$, where $\xi_i$ is
  independent with $\xi_j$ for $i\neq j$, $\E(\xi_i)=0$,
  $\E(\xi_i^2)=\frac{2\sigma^2}{Nh}$, $i\in[N]$, then
	\begin{equation}\label{eq:SigmaEstimate1}
	\hat{\sigma}^2=h^2\frac{(\frac{119}{16}-\frac{27}{4N})V_1+(\frac{9}{4N}-\frac{49}{16})V_2+(\frac{9}{2N}-\frac{35}{8})V_3}{\frac{3577}{128}+\frac{189}{8N^2}-\frac{819}{16N}},
	\end{equation}
	{is a consistent estimate of $\sigma$}, in the sense that
	\begin{equation}\label{eq:varEstimate1}
	\E｛(\hat{\sigma}^2)｝-\sigma^2=
	h^2\frac{(\frac{49}{16}-\frac{9}{4N})(V_1^c-V_2^c)+(\frac{35}{8}-\frac{9}{2N})(V_1^c-V_3^c)}{\frac{3577}{128}+\frac{189}{8N^2}-\frac{819}{16N}},
	\end{equation}
	where
	\begin{equation*}
	V_{j+1}^c=\sum_{i=1}^{N_{j+1}-1}\frac{|u^{(j)}_{i+1}-u^{(j)}_i|^2}{2^{j-1}h},\quad\mbox{for}\ j=0,1,2,
	\end{equation*}
	with
	 $u_i^{(0)}=u_{i}, i\in [N_1]$,
	$u_i^{(j)}=\frac{u_{2i-1}^{(j-1)}+u_{2i}^{(j-1)}}{2}, i\in[N_{j+1}], j=1,2.$
\end{theorem}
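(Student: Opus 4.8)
The plan is to track how the noise propagates through the dyadic averaging in \eqref{eq:vdef} and then to recognize the estimator \eqref{eq:SigmaEstimate1} as a linear combination of the $V_k$ engineered to cancel the clean signal and isolate $\sigma^2$. Throughout I write $s:=\sigma^2/h^2$ and index the three resolutions by $k=1,2,3$, corresponding to $j=0,1,2$ in \eqref{eq:vdef}.

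First I would introduce the noise averages $\eta_i^{(j)}$ by $\eta_i^{(0)}=\xi_i$ and $\eta_i^{(j)}=(\eta_{2i-1}^{(j-1)}+\eta_{2i}^{(j-1)})/2$, so that $v_i^{j}=u_i^{(j)}+\eta_i^{(j)}$ with $u_i^{(j)}$ the noise-free average from the statement. The preliminary lemma I need records three facts about $\eta_i^{(j)}$: $\E(\eta_i^{(j)})=0$ by linearity; $\Var(\eta_i^{(j)})=2^{1-j}\sigma^2/(Nh)$, because each averaging step combines two independent blocks and halves the variance; and $\eta_i^{(j)}$ is independent of $\eta_{i'}^{(j)}$ for $i\ne i'$, since the two are built from disjoint sets of the original $\xi$'s. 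These three properties are exactly what the rest of the computation rests on.

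Next I would compute $\E(V_k)$. Expanding $|v_{i+1}^{j}-v_i^{j}|^2=|(u_{i+1}^{(j)}-u_i^{(j)})+(\eta_{i+1}^{(j)}-\eta_i^{(j)})|^2$ and taking expectations, the cross term drops by $\E(\eta^{(j)})=0$, leaving a clean part proportional to $V_k^c$ plus a noise part $\frac{1}{2^{j}h}(N_k-1)\cdot 2\Var(\eta^{(j)})$. Substituting the variance and $N_k=N/2^{\,j}$ collapses the noise part to $\alpha_k\,s$, where a short calculation gives $\alpha_1=4-4/N$, $\alpha_2=1/2-1/N$, $\alpha_3=1/16-1/(4N)$; thus $\E(V_k)=(\text{const})\,V_k^c+\alpha_k\,s$. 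This step, the propagation of the $2^{j}$ weights through both the clean and noisy parts, is the one I expect to be the main obstacle, since all the rational coefficients in \eqref{eq:SigmaEstimate1}--\eqref{eq:varEstimate1} are fixed here and a single misplaced power of $2$ corrupts everything downstream.

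Finally I would verify that \eqref{eq:SigmaEstimate1} is the combination $\hat\sigma^2=h^2(aV_1+bV_2+cV_3)/D$ whose coefficients are designed so that, under the modeling assumption $V_1^c=V_2^c=V_3^c$ (self-similar clean variation across scales), the estimator is exactly consistent; equivalently $(a,b,c)$ are proportional to $(\alpha_k-\bar\alpha)$ and $D$ to $\sum_k(\alpha_k-\bar\alpha)^2$, the slope of the regression of $V_k$ on $\alpha_k$. Two algebraic identities then close the argument for arbitrary $u_i$. The normalization identity $a\alpha_1+b\alpha_2+c\alpha_3=D$, which reduces the denominator of \eqref{eq:SigmaEstimate1} to $\sum_k(\alpha_k-\bar\alpha)\alpha_k$, makes the coefficient of $\sigma^2$ in $\E(\hat\sigma^2)$ equal to $1$. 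The centering identity $a+b+c=0$ (i.e. $\sum_k(\alpha_k-\bar\alpha)=0$) lets me rewrite the surviving clean contribution $aV_1^c+bV_2^c+cV_3^c$ as $-b(V_1^c-V_2^c)-c(V_1^c-V_3^c)$; inserting $-b=\frac{49}{16}-\frac{9}{4N}$ and $-c=\frac{35}{8}-\frac{9}{2N}$ yields precisely the difference form of the bias in \eqref{eq:varEstimate1}, completing the proof.
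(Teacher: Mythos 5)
Your proposal is correct and follows the same overall architecture as the paper's proof: establish $\E(V_k)=V_k^c+X_k\,\sigma^2/h^2$ with $X_1=4-4/N$, $X_2=\tfrac12-1/N$, $X_3=\tfrac1{16}-\tfrac1{4N}$, read off $\hat\sigma^2$ as the least-squares slope through the three points $(X_k,V_k)$, and obtain the bias from the centering identity $a+b+c=0$ together with $aX_1+bX_2+cX_3=D$. Where you differ is in the middle step: the paper proves an algebraic lemma relating $\sum_i(u_i^{(1)}-v_i^1)^2$ across dyadic levels and then expands each $V_k$ term by term into clean, cross, and noise-product pieces before taking expectations, whereas you introduce the averaged noise $\eta_i^{(j)}$ and propagate its variance ($\Var(\eta_i^{(j)})=2^{1-j}\sigma^2/(Nh)$, independence across $i$) directly into $\E\bigl[(\eta^{(j)}_{i+1}-\eta^{(j)}_i)^2\bigr]=2\Var(\eta^{(j)})$. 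Your route is shorter and arguably cleaner, since it uses only the stated stochastic assumptions and avoids the paper's detour through the deterministic identity \eqref{eq:Def-sigma}; the paper's explicit expansion, on the other hand, exhibits every boundary and cross term, which is why its $X_k$ carry the exact $O(1/N)$ corrections you also recover. One caution: your clean part written as $(\mathrm{const})\,V_k^c$ papers over a factor-of-two discrepancy that already exists in the paper between the theorem statement's normalization $2^{j-1}h$ for $V^c_{j+1}$ and the proof body's $2^{j}h$; your final bias formula implicitly adopts the latter, which is the one consistent with \eqref{eq:varEstimate1}, so no harm is done, but it is worth stating the normalization explicitly.
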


\begin{remark}
  From the representation of $\E(\hat{\sigma}^2)$ in Theorem
  \ref{thm: Thm1}, it can be seen $\hat{\sigma}$ {depends
    on} {$h$}.
	If $h$ is smaller, then the estimation can be closer to the real unknown $\sigma$.
\end{remark}

We {now use} two examples to {test} the ability of the
method above: 1) $u(x)=\sin(\pi x)$, $x=[-1,1]$,
$h=\frac{2}{N}$; 2) $u(x)=\max(1-|x|,0)$, $x=[-1,1]$,
$h=\frac{2}{N}$.  We add white noise whose standard
deviation is $\sigma$ such that the signal-to-noise ratio is
3. Test results are in Table \ref{tb:1}. From Table
\ref{tb:1}, we found that the numerical results of proposed
method for estimating $\sigma$ are very good. However, some
assumptions on noise are required in this method. Next, we
present a method use less assumptions.

\begin{table} [hbp]
	\caption{Numerical Results of Noise Estimate Using Method 1.}
	\label{tb:1}
	\centering
	\begin{tabular}{l|lll|llll}
		\hline
		& \multicolumn{3}{|c|}{$u(x)=\sin(\pi x)$}  & &
		\multicolumn{3}{c}{$u(x)=\max(1-|x|,0)$} \\
		\hline
		$N_l$ & 288& 144& 72 & 	 & 288& 144& 72 \\
		\hline
		$\sigma$& 0.4941& 0.4446& 0.4385 & 	& 0.4127& 0.4176& 0.3992 \\
		\hline
		$\hat{\sigma}$& 0.4896& 0.4558& 0.4646 && 0.4089& 0.4290& 0.3813 \\
		\hline
		$Bias(N_l)$& 2e-6& 1.69e-5& 1.47e-4 & 	& 4.47e-7& 3.59e-6& 2.89e-5\\
		\hline
		$\hat{\sigma}^2/\sigma^2-1$& -0.0182& 0.0508 & 0.1230 && -0.0182& 0.0554& -0.0878\\
		\hline
	\end{tabular}
\end{table}

\textbf{Method 2}: a balance between $\sigma$ and $TV$.

From the BV {denoising model}, we see that when $\sigma=0$,
$TV(u)=TV(u_0)$. {As $\sigma$ increases, $TV$ decreases, and
  eventually goes to $0$}.  In {this} process, a critical
point exists at which the decreasing speed of $TV$ has a
{significant} slow down .  We regard this point as
$\hat\sigma$, and give a quantitative way to identify
{it}. Choose some values for $\sigma$ in increasing order
first. And for any two successive $\sigma_1$ and $\sigma_2$,
$\sigma_1<\sigma_2$, calculate the variance
$\delta(TV(\sigma_2)\cdot\sigma_2^2)$ as
\begin{equation}
\delta(TV(\sigma_2)\cdot\sigma_2^2))=TV(\sigma_2)\cdot\sigma_2^2-TV(\sigma_1)\cdot\sigma_1^2.
\end{equation}
Plot the result of $\delta(TV(\sigma)\cdot \sigma^2)$
against $\sigma^2$. Then find the $\sigma$ where
$\delta(TV(\sigma)\cdot \sigma^2)$ reaches {the local}
minimum value for the first time, and regard it as the value
for $\hat{\sigma}$.  {We use
  $\delta(TV(\sigma_2)\cdot \sigma_2^2)$ as an index because
  $TV(\sigma_2)\cdot \sigma^2_2$ increases first and then
  decreases with the increase of $\sigma_2$. The first stage
  is mainly due to noise reduction, but in the second stage
  some of the real $TV$ is removed. So we need a good
  threshold. By some numerical tests, we found that the
  location where the increment gets its first local minimum
  is a good choice. }

We have presented two methods to estimate noise strength
${\sigma}$. However, both of them have advantages and
disadvantages. The first method {is easier to apply}, but
the estimation result depends on some related
assumptions. The second method does not need any assumption
on noise, but it is heuristic.  To get a better estimate, we
combine the two methods together. Let $\hat{\sigma}_1$,
$\hat{\sigma}_2$ denote the estimated noise strength with
\textbf{Method 1} and \textbf{2}.  Denote the maximum,
minimum velocity $v_{\max}$, $v_{\min}$ for a road in a
day. Then take $TV_l=\frac{5}{2}(v_{\max}-v_{\min})$ as a
lower bound of $TV$.  If the $TV$ corresponding to
$\min(\hat{\sigma}_1,\hat{\sigma}_2)$ is less than $TV_l$,
then let $\hat{\sigma}$ be the $\sigma$ corresponding to
$TV_l$, otherwise
$\hat\sigma = \min(\hat{\sigma}_1,\hat{\sigma}_2)$.

\section{Experiments}
In this section, we use taxi GPS data of Beijing to
demonstrate the performance of our method. There are 288
time slices in one day, {each is 5-minute long}. If no GPS
record appears in some {slice}, we get no velocity record in
that time slice. In general, if one method performs well in
a randomly chosen domain of Beijing, then it can be extended
to other domains. Therefore, we randomly choose an area from
Beijing and consider the GPS records for roads in it. First,
we choose 24 roads at random from this fixed domain, with
road length not less than 100 meters and daily record number
not less than 150. {The results for more roads are similar,
  the reason we choose 24 roads is to show the results in
  limited page space.}  We use the 24 roads to test the
estimated noise strength with methods proposed in section 2
and combine them together to obtain the best estimate. After
that, the velocity of these 24 roads before and after
denoising on a fixed day are presented to give a visual
effect directly.  In further, we use history matching to
prove that predicting accuracy is improved after denoising
with the best estimated noise strength.  Second, 102 roads
in the chosen domain, whose length is not less than 100
meters and velocity records are not less than 120, are put
together to see the denoising effect on clustering. As a
result, a satisfactory clustering consequence is obtained
after denoising.

\subsection{Estimated best noise strength}

We carry out experiments on the 24 roads for 8 days, and
only present some related results on the first day in this
part {since the results for other days are similar}. Day 1
to 8 correspond to March 4-8, 18-19, 11, 2013, all of which
are weekdays. {Since} there are time slices without velocity
record, we use linear nearest interpolation and matrix
completion methods {\cite{Cand2009Exact,Fazel2001A}} to
complete the missing values. The estimated $\sigma$ for the
24 roads on Day 1 with two different kinds of {treatments
  for missing data} by \textbf{Method 1} are shown in Figure
\ref{Fg:SigmaM1}, which {shows that the differences are
  small}.
\begin{figure*}[!ht]
	\centering
		\includegraphics[width=2.8in]{./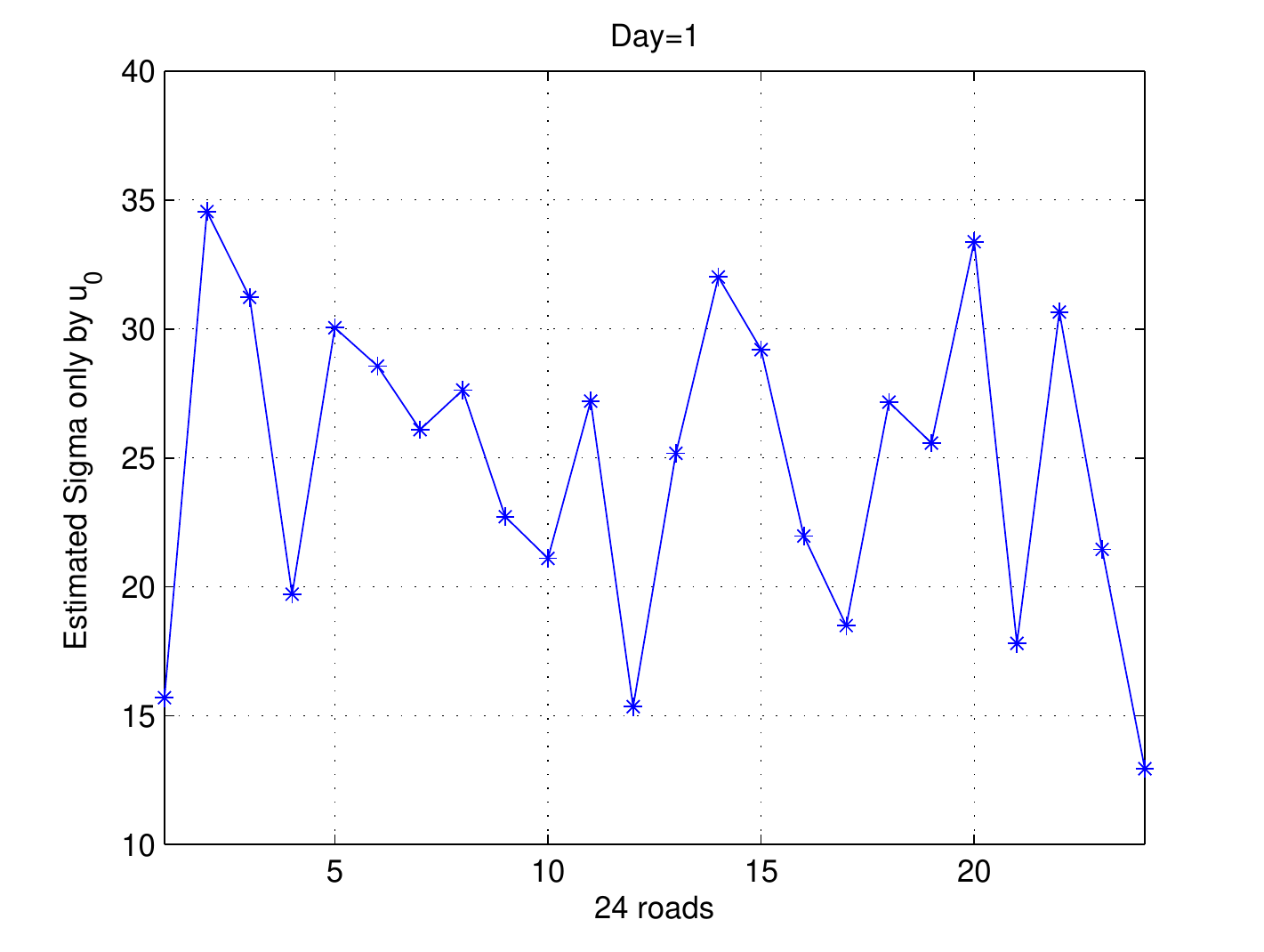}
		\includegraphics[width=2.8in]{./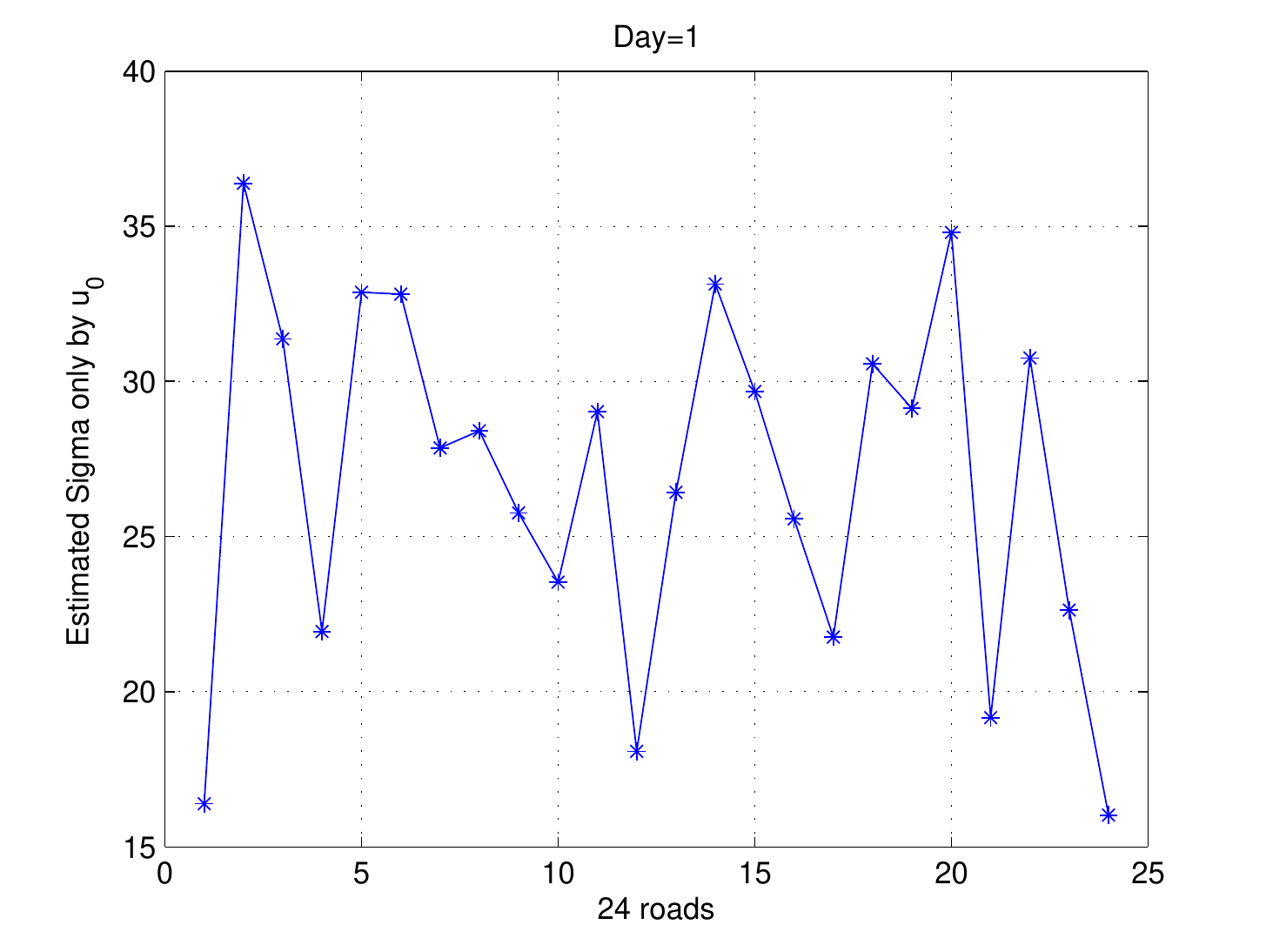}
		\caption{Estimated $\sigma$ by \textbf{Method 1}
          using different methods for missing values (Left:
          Nearest interpolation; Right: Matrix
          completion). }
		\label{Fg:SigmaM1}
\end{figure*}
Then we estimate noise strength with \textbf{Method
  2}. First, Figure \ref{Fg:BV} shows the relationship
between $TV$ and $\sigma^2$, where the horizontal and
vertical axis represent $\sigma^2$ and $TV$
respectively. The values of $\sigma$ are $0,1$ {and every
  other 5 from $5$ to $50$}, totally 12 numbers. In this
figure, red star points correspond to the 12 values of
$\sigma^2$, and blue star points denote the cases for
$\sigma^2=[10^2,20^2,30^2]$. It can be discovered that, for
each road, $TV$ is decreasing with the growth of
$\sigma^2$. The quantitative way to identify $\hat\sigma_2$
in Method 2 is {demonstrated} in Figure \ref{Fg:BVSigma^2},
where the horizontal axis is the value of $\sigma^2$, and
vertical axis is $\delta(TV(\sigma)\cdot\sigma^2)$. For
example, the abscissa of the first point on road 1 is
$\sigma=1$, and its ordinate is $\delta(TV(1)\cdot
1^2)$. $\hat{\sigma}_2^2$ is the location where
$\delta(TV(\sigma)\cdot\sigma^2)$ reaches minimum value for
the first time. For road 1, this minimal point is reached at
$\sigma=20$, which implies $\hat{\sigma}_2=20$. {In Figure
  3, the intersection of the green line with the horizontal
  axis is the best estimated $\sigma^2$.  The results of
  combining the two methods together is also shown in Figure
  \ref{Fg:BV}, where the values of $\hat{\sigma}_1$,
  $\hat{\sigma}_2$ are the intersections of the blue, green
  vertical line and the horizontal axis separately. And blue
  circle points are the best estimates for noise strength.
  It can be seen from Figure 2 that all the best estimates
  for the noise strength lie in
  $(\hat{\sigma}_2-5,\hat{\sigma}_2+5)$ except road 17 and
  18. Therefore, {the criterion we chosen in method 2 is
    appropriate.}

\begin{figure*}[!ht]
	\centering
	\includegraphics[width=\textwidth]{./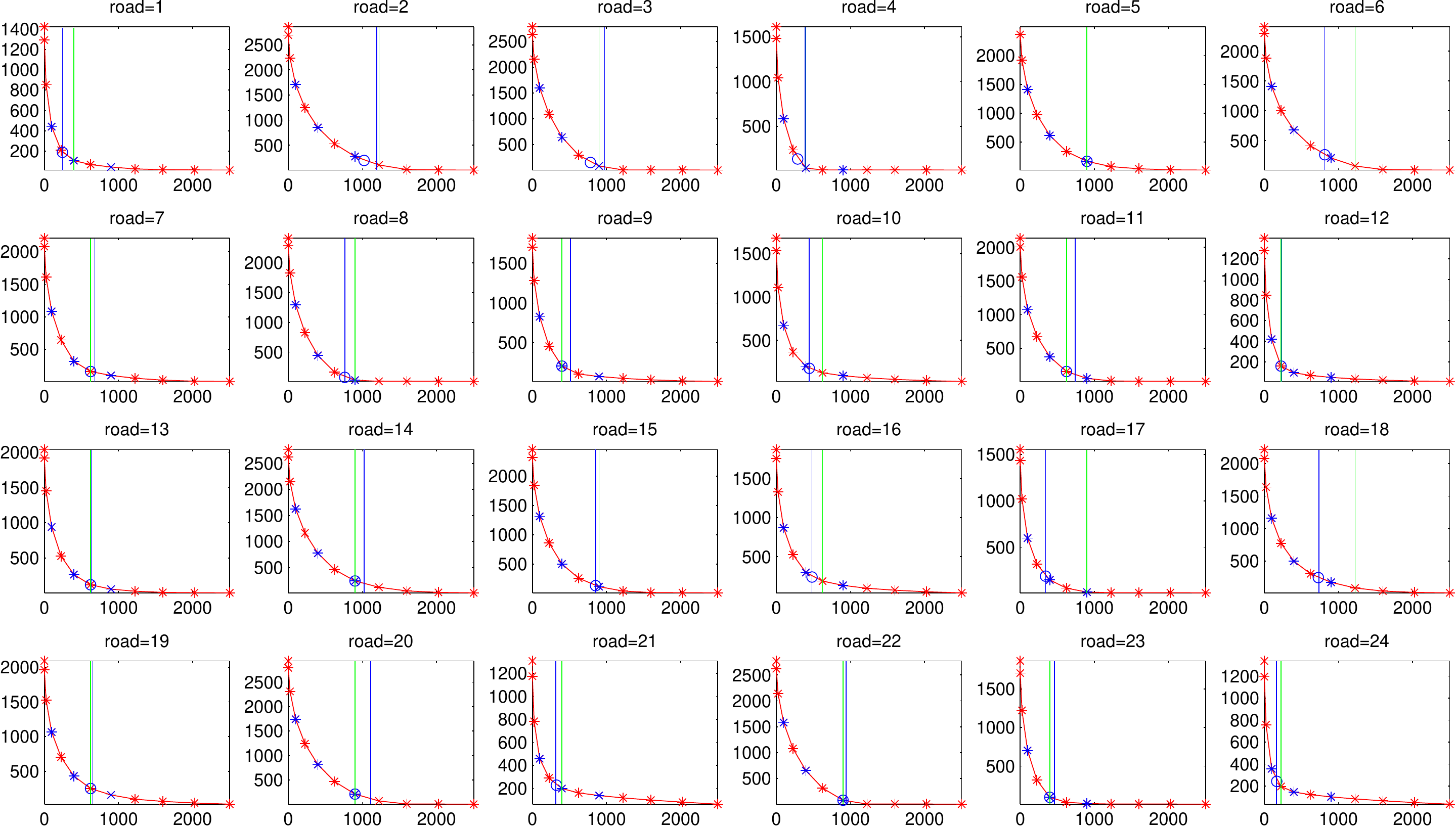}
	\caption{Relationship figure for $TV$ and $\sigma^2$ on Day 1.  }
	\label{Fg:BV}
\end{figure*}
\begin{figure*}[!ht]
	\centering
	\includegraphics[width=\textwidth]{./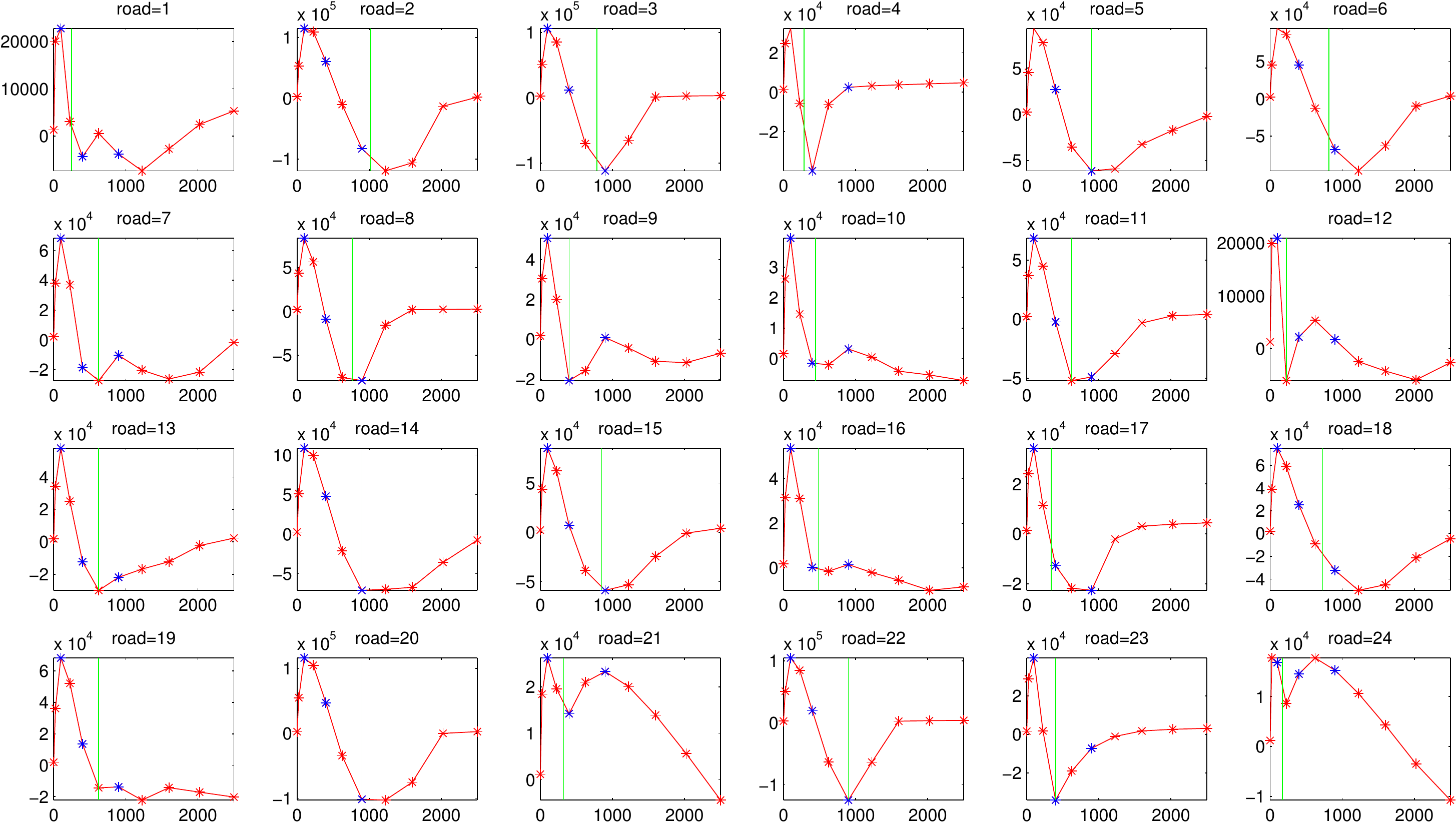}
	\caption{Estimated $\sigma$ for the 24 roads on Day 1 with \textbf{Method 2}. }
	\label{Fg:BVSigma^2}
\end{figure*}

\subsection{Velocity before and after denoising}

We show the velocity before and after denoising on Day 1
{with best chosen $\sigma$ for the 24 roads } in Figure
\ref{fg:V}, where blue and red points represent the
{velocity} before and after denoising
{respectively}. {Nearest interpolation is used to handle
  missing values}.  From Figure \ref{fg:V}, we can say we
have reduced much noise in the velocity
indeed. Specifically, the {velocity} vector is much smoother
after denoising than that of before denoising. And the
velocity after denoising keeps the peak values, such as the
peak in the morning and evening. In addition, it can be
observed {that} when the original velocity has a sudden
jump, the denoised velocity also has a jump almost at the
same time, which is {clearly observed} on road 21 and
24. Namely to say, our denoising method can keep the jump
when the original velocity has a sudden ascending or
descending, which is very important in urban traffic
prediction and analysis. This is consistent with the edge
preserving property of bounded total variation method. On
the other hand, the denoising effect of some roads, like
road 3, 8, 22, are not that obvious as others although the
performance is better than the original velocity. This may
be caused by their location. They are close to
{intersections} and are all assistant roads, which {have
  big} spatial noise, while we only consider temporal noise.
\begin{figure*}[!ht]
	\centering
	\includegraphics[width=\textwidth]{./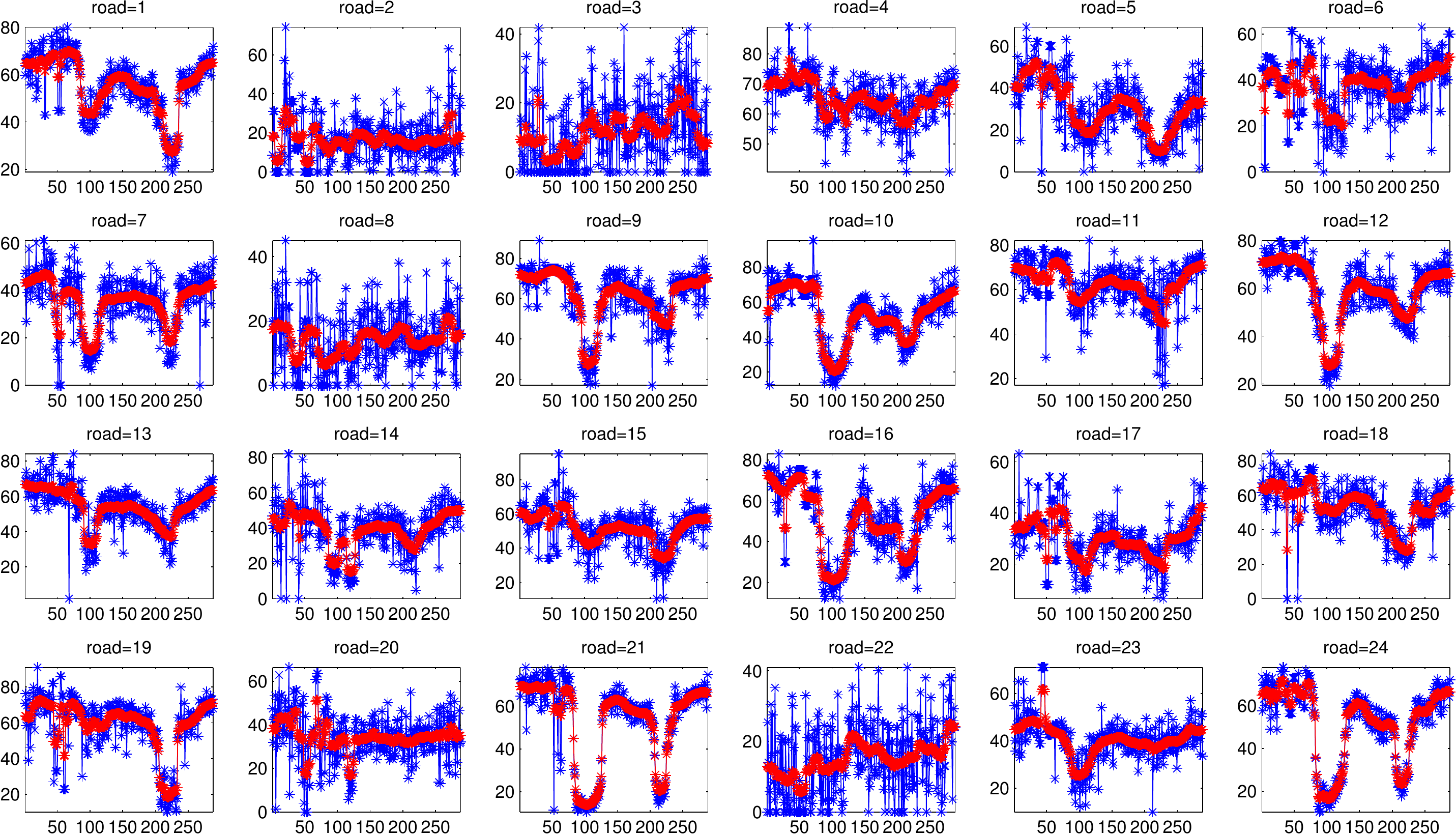}
	\caption{Velocity before and after denoising for the 24
      roads on Day 1. The horizontal axis is 288 time
      slices, and the vertical axis is the velocity value. }
	\label{fg:V}
\end{figure*}

\subsection{Validation by history matching}

We have seen the denoising effect directly above. {Now} we
validate that this denoising method helps in velocity
prediction. We first make a nearest
interpolation 
to assure there are 288 records every day for all the chosen
roads.  Data of the first 7 days are regarded as history
data, and the prediction part is from Day 8. We realize it
based on history matching. Specifically, we break the
history data daily into vector of length 4 in terms of time
series. Take Day 1's data for example, 282 vectors are
obtained. The first one of them is the velocity records from
time slice 1 to 4, and the last one is that of time slice
282 to 285. And the velocity vector after 15 minutes is from
time slice 7 to 288 {respectively}. Therefore, 1974 input
vectors of length 4 and corresponding labels of length 1 are
gained for the history part. And for Day 8's data, the goal
vector {for time slice $K$} is formed by velocity of time
slice $K-3$ to $K$. We make use of a popular clustering
algorithm~\cite{Rodriguez2014Machine} (more details in the
last part of this section) to cluster for the 1974 input
vectors and the goal vector. The distance between two
vectors is measured by $l^2$ norm. After that, we regard the
Gaussian-weighted average of those labels, whose input
vectors are in the same cluster with the goal vector, as the
velocity prediction.

What we {described above} is exactly the prediction process
without BV denoising. For the denoising case, {there is a
  small difference}.  For the history data, Algorithm PGDBV
can be applied daily to obtain the denoised data. However,
for data from Day 8, it should be guaranteed that no future
information is used to denoise and predict, leading that the
denoising part should be treated carefully. Notice that a
boundary condition is needed in the denoising process of
Algorithm PGDBV, thus we {supplement} a boundary value for
time slice $K+1$, and use the velocity vector till $K+1$ to
denoise, and component of the goal vector after denoising is
still from $K-3$ to $K$.

{To get the boundary condition}, we use a very simple neural
network based on data before denoising to make a 5-minute
prediction. Input and output of the network are exactly the
1974 input vectors and corresponding labels before denoising
separately. Input test are the 282 goal vectors before
denoising, and then the network's output test are regarded
as the boundary values to denoise.

We measure the predicting accuracy using relative mean
absolute error (RMAE) and mean absolute percentage error
(MAPE). {They are defined as}:
\begin{equation*}
\text{RMAE}=\frac{\sum_{i=1}^N|u_i-\hat{u}_i|}{\sum_{i=1}^N|u_i|},\quad
\text{MAPE}=\frac{1}{N}\sum_{i=1}^N\frac{|u_i-\hat{u_i}|}{|u_i|}.
\end{equation*}

Notice that when some component of the real speed vector is
very close to 0, MAPE of the corresponding road will be very
large.  Thus we discard component whose real velocity is not
bigger than 1 when calculating a {road's} MAPE. After
obtaining the error of each chosen road, we take the average
of them, which stands for the predicting error on
average. 15-minute predicting results based on nearest
interpolation are listed in Table \ref{tb:15minErr}. The
RMAE reduces at least more than 1.4 percent when choosing
$\hat\sigma$ comparing with no denoising. And the MAPE
reduces 2.6 percent with $\hat\sigma$ for the 24 chosen
roads on average.
\begin{table}[!htbp]
	\caption{Error of 15-minute predicting with nearest interpolation for missing data}
	\label{tb:15minErr}
		\centering
		\begin{tabular}{l|lll|llll}
		    \hline
		    & \multicolumn{3}{|c|}{RMAE}  & &
		    		\multicolumn{3}{c}{MAPE} \\
			\hline 
			$\sigma$& 0 &$\tilde{\sigma}$  &   
			 $\hat\sigma$ & & 0   &$\tilde{\sigma}$  &   
			 			 $\hat\sigma$ \\
			\hline 
			24 roads& & & && & &\\
			average& 0.2641  & 0.2507   & 0.2414&  & 0.3062 & 0.2803     & 0.2802 \\
			\hline
			w/o road & & & && & &\\
			2,3,6,8,22     & 0.1806  & 0.1729   & 0.1660 &
			& 0.2284 & 0.2090 	 & 0.2101\\
			\hline
			w/o road & & & && & &\\
			2,3,6,7,8,22   & 0.1746  & 0.1680   & 0.1605 & & 0.2136 & 0.2023     & 0.1980\\
			\hline
		\end{tabular}
\end{table}

In {this table}, the first line represents values of
$\sigma$ applied in Algorithm PGDBV, where '0' represents
the case without denoising and $\hat\sigma$ means the best
chosen $\sigma$ for each chosen road and each goal vector,
and '$\tilde{\sigma}$' means only denoising for history data
with $\hat\sigma$. The second line means the average
error(RMAE or MAPE) of the 24 chosen roads. The third line
is the result of error for 19 roads except the 2nd, 3rd,
6th, 8th, 22nd roads, and {the 7th road is also excluded for
  the results on} the fourth line. The reason why we skip
road 2, 3, 6, 8, 22 is that they are rural {streets} and the
prediction accuracy is low. In addition, road 7's predicting
error is also high. This may be caused by its location,
which is very near to a bridge. From the predicting errors
shown, it can be seen that the RMAE {with $\hat\sigma$ are
  reduced for all the cases}. This reveals that the
$\hat\sigma$ we choose for the 24 roads are appropriate and
the BV denoising method we {proposed} is {helpful}.

We finish this section {with a short discussion about the
  treatments of missing data}. In {Table \ref{tb:15minErr}},
we use nearest interpolation to recover the missing values
as it is a simple and common way. To compare the
performance, we also test {the} matrix completion
approach. We find the predicting accuracy of matrix
completion approach is improved comparing to the nearest
interpolation approach, {but the improvement is not
  significant}.

\subsection{Clustering of roads using velocity profile}

We randomly choose 102 roads on Day 1 and {cluster them}
according to their velocity profiles before and after
denoising respectively, {to further test the denoising
  algorithm works in a right way.}

When clustering, we regard the velocity of $N$ time slices
for a road as a vector {of dimension} $N$. As a result, we
have 102 vectors in total. We measure the distance of two
vectors with the $l^2$ norm of their differences. The
clustering method is based on finding of density
peaks~\cite{Rodriguez2014Machine}. The most important reason
we choose this clustering method is {that Outliers, which
  can be seen as noise according
  to~\cite{Rodriguez2014Machine}}, can be discovered easily
with this algorithm if exist. Therefore, if there are
outliers in the clustering result with velocity before
denoising and no outliers in the clusters after denoising,
we can say noise is eliminated with our denoising method.
{The framework for validating data cleaning techniques based
  on assumption that better denoising schemes lead to better
  data analysis has been also described in
  \cite{Xiong2006Enhancing}. In further, the impact of noise
  on clustering is shown in \cite{liu2010understanding} and
  reveals noise can cause adverse effects on clustering
  indeed.}

In order to show the process of clustering on the 102 roads
in detail, it is necessary to present the procedure of
clustering algorithm~\cite{Rodriguez2014Machine}.  The steps
of this clustering method combined with our data are
summarized as follows:
\begin{itemize} 
\item{Step 1.} Calculate the local density $\rho_i$ for each
  road $i$. Let $d_{ij}$ be the $l^2$ norm distance between
  road $i$ and $j$, $d_c$ be a scaling constant for
  distance, $I_s$ denote the index set of all the 102
  roads. We use the Gaussian kernel to calculate $\rho_i$,
  which is defined as
  $\rho_i=\sum_{j\in{I_s\backslash
      \{i\}}}\exp{(-\frac{d_{ij}}{d_c})^2}$.
\item{Step 2.} Calculate the distance to other roads with
  higher density for {each} road $i$:
  $\delta_i=\min_{j: \rho_j>\rho_i}(d_{ij})$.
\item{Step 3.} Plot the {\em{decision graph}}, in which the
  horizontal axis is the calculated $\rho$ in Step 1 and
  vertical axis is the calculated $\delta$ in Step 2. Points
  with both higher $\rho$ and higher $\delta$ are clustering
  centers. We can also plot $\gamma(=\rho\delta)$ in
  decreasing order to find out the clustering centers.
\item{Step 4.} Order all the roads' density in decreasing
  order. Then screen all the roads from the highest density
  to the lowest one. If a road is not a cluster center, then
  it belongs to the {cluster of} nearest road with higher
  density.
\item{Step 5.} If the number of clusters is bigger than 1,
  then it's necessary to identify whether a road is a core
  road or a halo. Decide the border region for a fixed
  cluster first. Points in this region satisfy that they
  belong to this cluster, but there exist points belonging
  to other clusters within $d_c$. Then calculate the average
  local density based on the border region to identify a
  cluster core or halo. Outliers are often in halos.
\end{itemize}

We do as the steps {described} above to cluster for the
roads with velocity before and after denoising. From the
decision graph, the first row in Figure \ref{Fg:DecGraph},
and the $\gamma$ variance in Figure \ref{Fg:Gamma}, we
should choose 3 clusters for the 102 roads both before and
after denoising. In order to observe the performance of
classification obviously, we calculate the 2 dimensional
non-local multidimensional scaling matrix just
as~\cite{Rodriguez2014Machine} does. This matrix with 2
columns is an approximation of the original distance matrix
whose diagonal elements are all 0 and off-diagonal element
in location $(i,j)$ represents the distance of road {$i$}
and {$j$}. The distance here refers to the $l^2$ norm of
their velocity vectors' difference. With the help of the
approximated scaling matrix, we present the clustering
result in the 2D non-local multidimensional scaling figure,
the second row of Figure \ref{Fg:DecGraph}, where the X axis
and Y axis represent the first and second column of this
matrix correspondingly.
\begin{figure}[ht]
	\centering
		\includegraphics[height=4in,width=2.8in]{./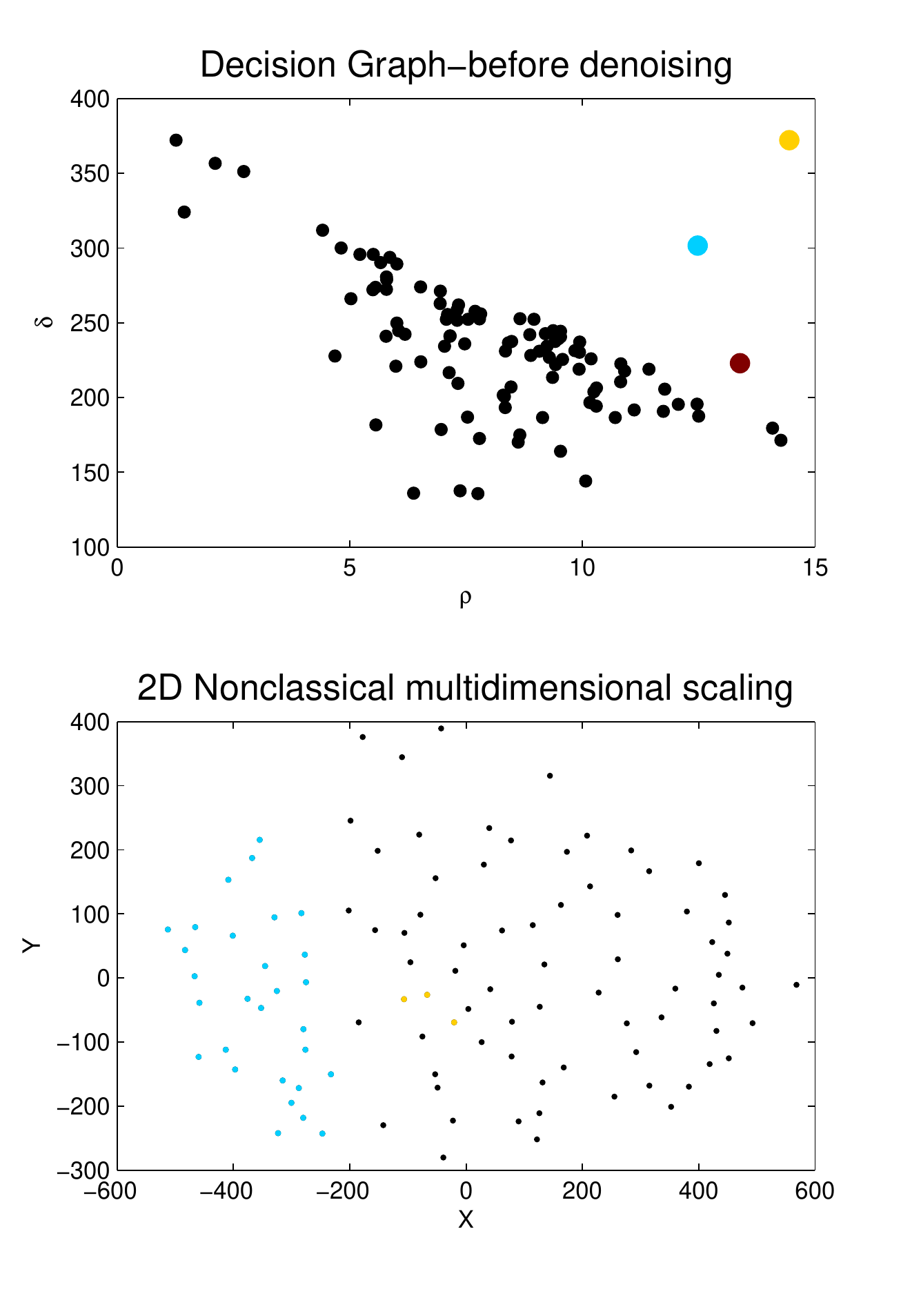}
		\includegraphics[height=4in,width=2.8in]{./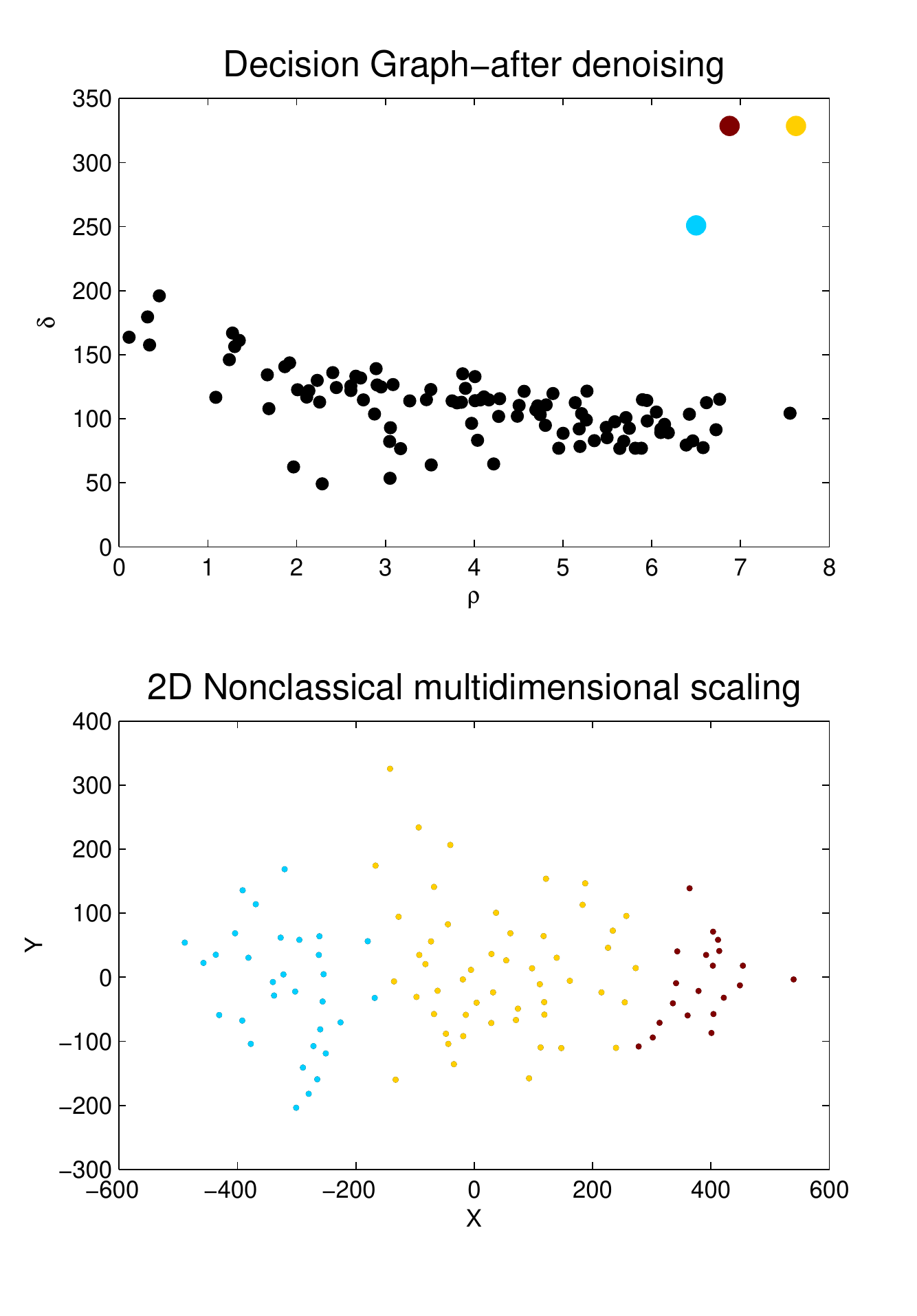}
		\caption{Decision graph and clustering result for the 102 roads. 
		}
		\label{Fg:DecGraph}
\end{figure}
\begin{figure}[!ht]
	\centering
		\includegraphics[height=2.5in,width=2.8in]{./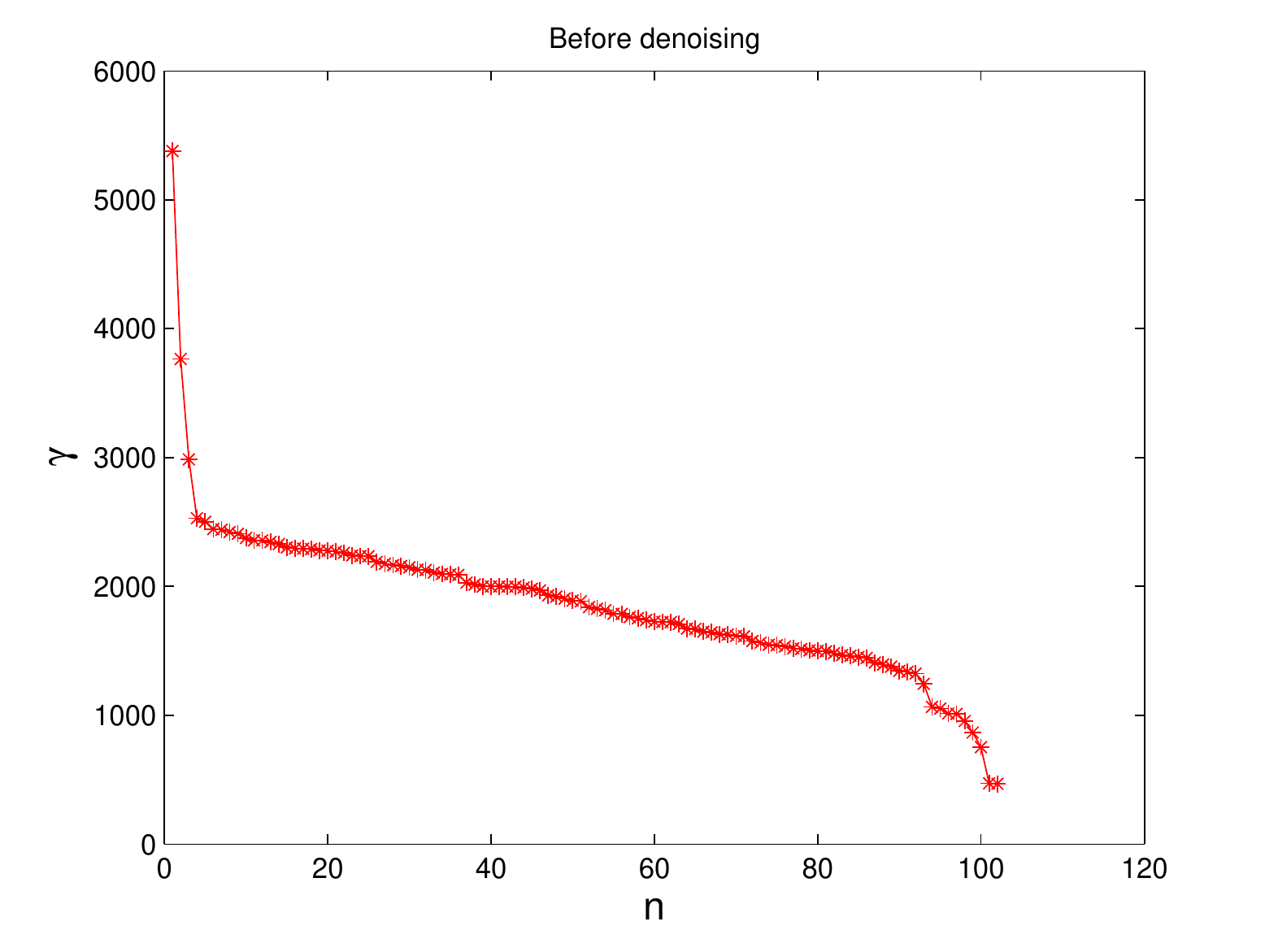}
		\includegraphics[height=2.5in,width=2.8in]{./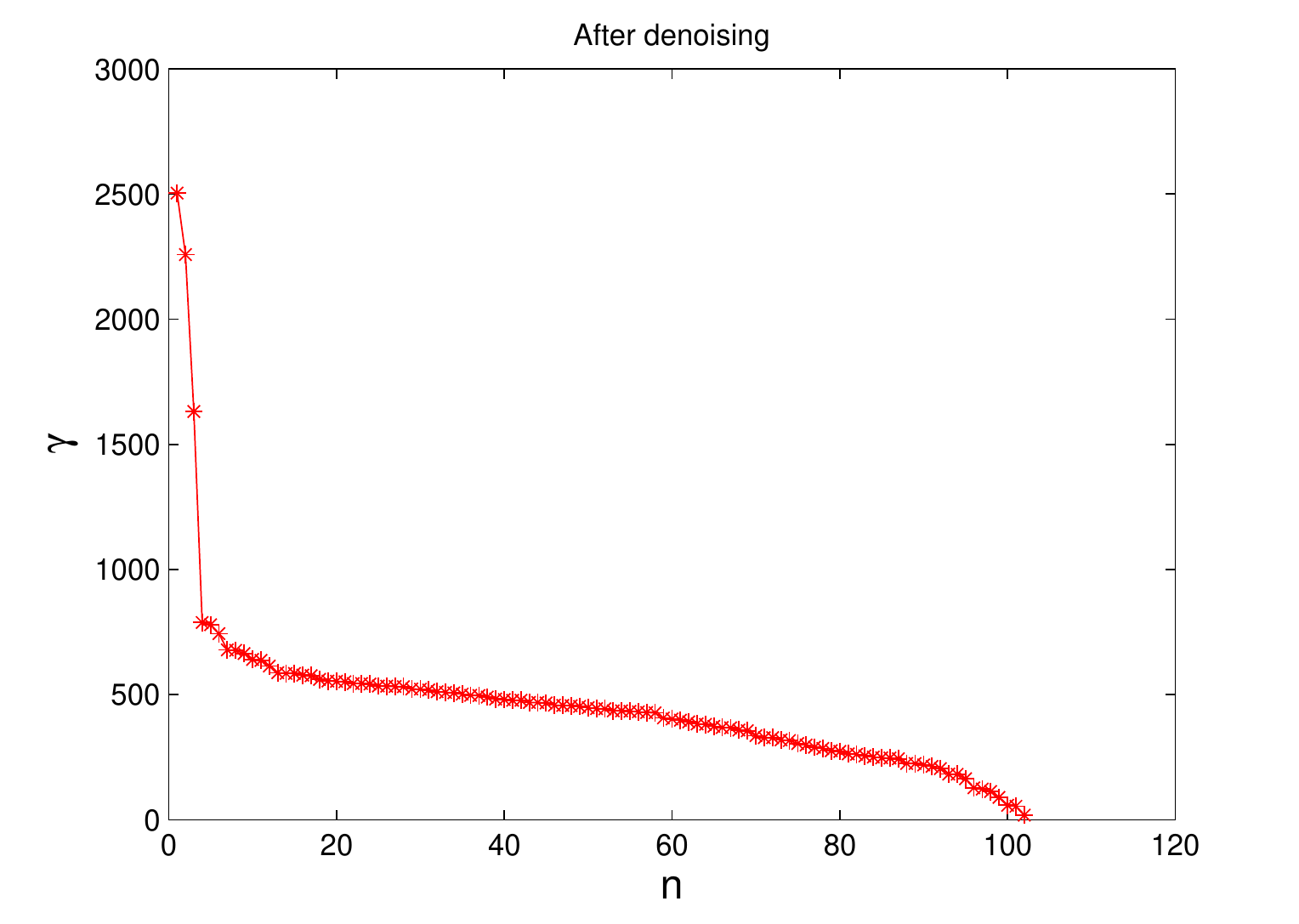}
		\caption{$\gamma$ variance before and after denoising. 
		}
		\label{Fg:Gamma}
\end{figure}

We illustrate the detailed result of clustering in the
following. The 3 cluster centers are the color points in the
first row of Figure \ref{Fg:DecGraph}. The other points are
shown with the color consistent in the second row of Figure
\ref{Fg:DecGraph} and classified as follows:
\begin{itemize} 
\item[(1)]Before denoising, cluster 1 with 29 core elements
  and 0 halos corresponds to the blue points in the 2D
  non-local multidimensional scaling figure, and cluster 2
  with 27 elements only has 3 core elements corresponding to
  the yellow points, and cluster 3 with 46 elements has no
  core elements. In the figure before denoising, if a point
  is a halo, then its color is black in the 2D non-local
  multidimensional scaling figure.
\item[(2)]After denoising, the three clusters all have no
  halos. Cluster 1 has 31 core points, 29 of which are
  exactly the ones that in cluster 1 before denoising, and
  cluster 3 has 20 core points. {Roads belong to cluster
    1,2,3 are represented by} the blue, yellow and purple
  points respectively in the 2D non-local multidimensional
  scaling figure. As there are no halos after denoising, so
  no black points appear this time.
\end{itemize}

Now we give some further explanations on the clustering
result.  {Note that roads in Beijing are divided into 6
  types, which are urban express highway, freeway, national
  highway, provincial highway, prefectural highway and rural
  street, the corresponding type value is from 1 to 6
  respectively.}  In order to illustrate more clearly, we
list some related {quantities} after BV denoising in Table
\ref{tb:3CluPro}.
\begin{table}[!ht]
	\caption{Some Basic Properties of the 3 Clusters.}
	\label{tb:3CluPro}
	\centering
	\begin{tabular}{lllll}
		\hline 
		cluster & average velocity & average $\TV$ & range & range \\
		number & of cluster center &  of cluster center
		 &  of velocity &  of $\TV$\\
		\cline{1-5}
		1& 54.052 & 123.414& [46.165,65.398]& [56.544,259.974]\\
		\cline{1-5}
		2& 32.321 & 104.428& [21.967,48.021]& [95.165,482.275]\\
		\cline{1-5}
		3& 14.965 & 210.578& [6.396,20.641] & [118.177,239.608]\\
		\cline{1-5}
	\end{tabular}
\end{table}
If a negative and positive linear transformation are made
for average velocity and total variation {respectively},
then the trend of variables obtained is consistent with that
of X and Y in the last one of Figure
\ref{Fg:DecGraph}. Therefore, cluster 1 corresponds to roads
with high average velocity and relatively low total
variation. Namely, they should be matched with smaller road
type value, meaning that most of them are urban express
highway and freeway. In fact, there are 9 roads whose type
value is 1, and 17 ways whose value is 2 in cluster
1. Cluster 3 contain roads with low average speed and bigger
total variation, corresponding to prefectural highway or
rural Street. It has been checked that all the roads' type
value are 6 in cluster 3 after denoising. And cluster 2 has
the medial value in terms of average speed and total
variation, indicating it can contain more types of roads,
which is consistent with the fact there are 8 freeways, 15
provincial highways and 28 rural streets.

\begin{figure*}[ht]
	\centering
	\includegraphics[height=2.5in,width=2.8in]{./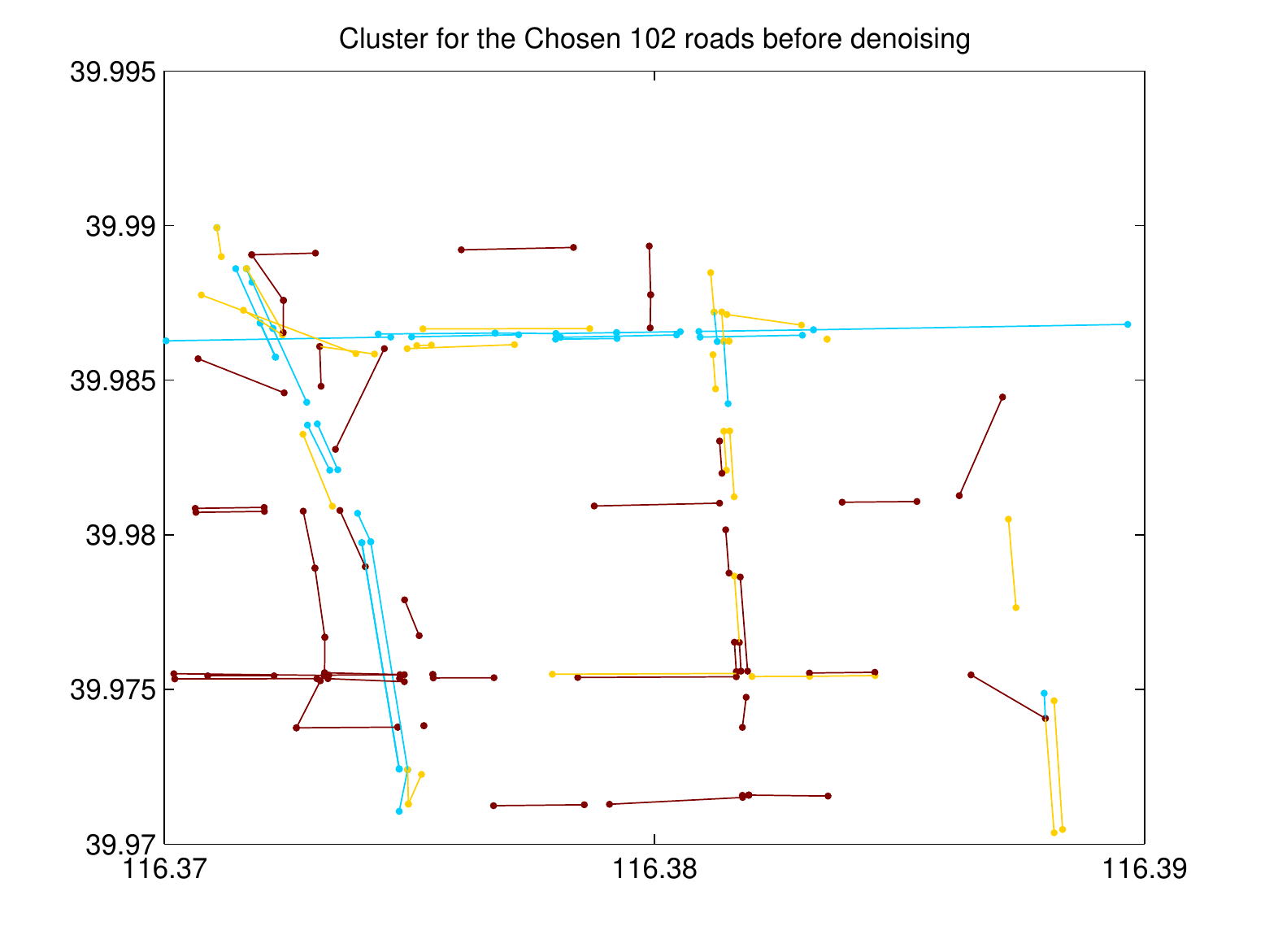}
	\includegraphics[height=2.5in,width=2.8in]{./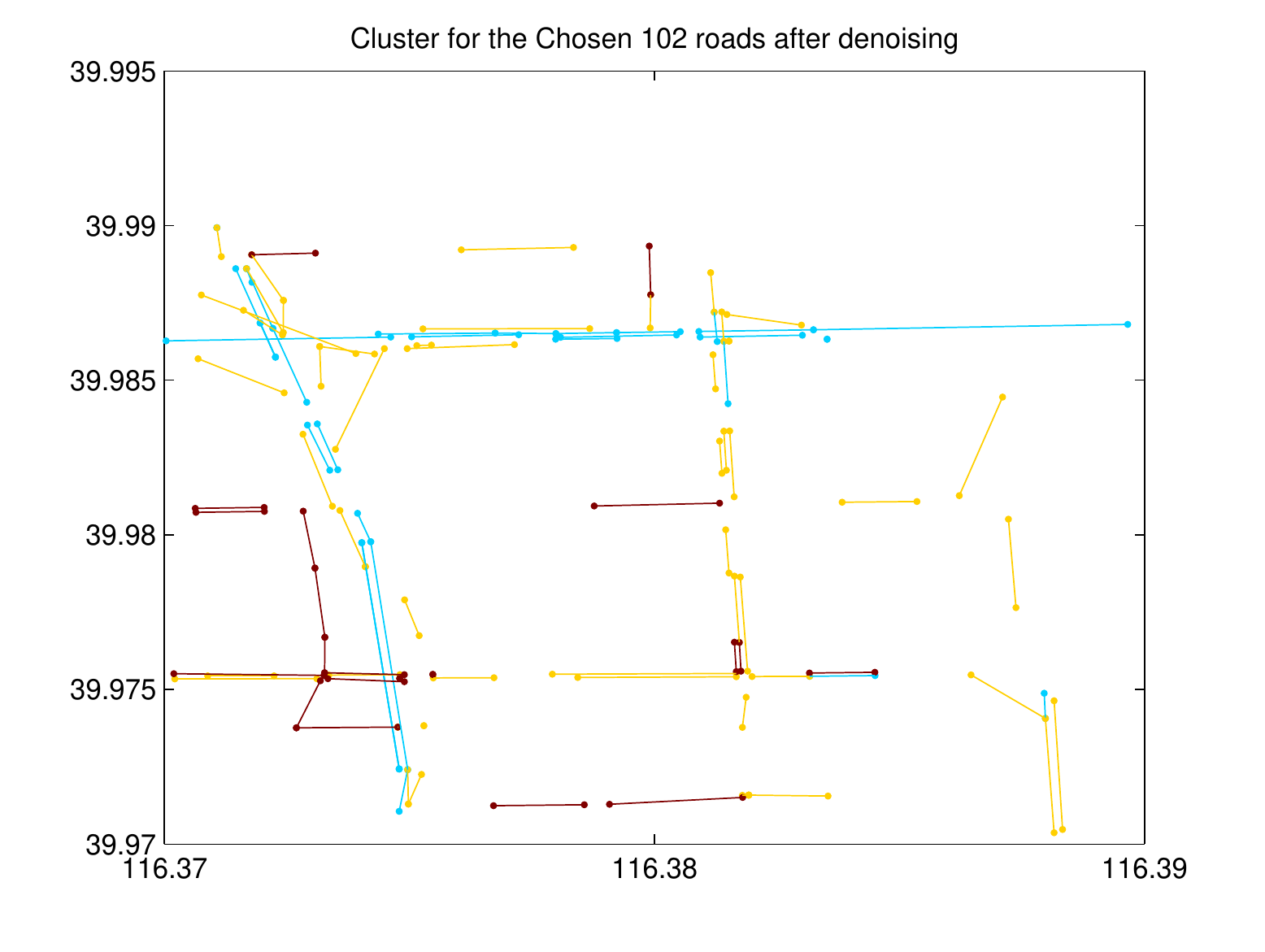} 
	\caption{The clustering result matched with the original 102 roads. 
		The horizontal axis represents longitude, and vertical axis represents latitude.}
	\label{Fg:OriRoad}
\end{figure*}

Finally, in order to observe the clustering result more
directly, we match the 102 points above with the original
102 roads in {corresponding colors} and show them in Figure
\ref{Fg:OriRoad}, where the horizontal axis represents
longitude, and vertical axis means latitude. {From Figure
  \ref{Fg:OriRoad}, we can see the clustering result
  improved greatly after denoising.} There are 28 roads
belonging to different clusters before and after denoising,
and most of them are rural streets. This is consistent with
our common sense.  By BV denoising method, these roads are
discovered and the noise part is eliminated.

{From the clustering results, we see} halos disappear with
the BV denoising method. Because outliers often lie in
halos, so we can say we obtain the clean data by denoising
on the original velocity. This shows our denoising method
works well.

\section{Conclusions}
In this paper, we {applied} the BV denoising method for
urban traffic analysis {by proposing } easy-to-implement
noise strength estimation algorithms.  By applying the BV
denoising method, we have showed the road velocity
prediction accuracy is improved for real urban traffic data
from Beijing Taxi GPS system.  We have also verified that
the road clustering based on velocity profile are improved
significantly by applying the BV denoising method with best
noise-strength estimate.

In this work, we considered only the temporal
characteristics of road velocity profiles. In a follow-up
study, we will consider both temporal and spatial
characteristics, which will be more mathematically involved
and {is expected to} give better results.

\appendix
\renewcommand{\thesection}{A\alph{section}}

\section*{Appendix: Proof of Theorem 2.1.}

In order to prove Theorem \ref{thm: Thm1}, we give a lemma first.
\begin{lemma}\label{lemma: Lem1}
  Let $u_0(x_i)=u_i+\xi_i, i\in[N]$, where $u_0(x_i)$,
  $u_i$, $\xi_i$ represent the observed, denoised data and
  noise for time slice $i$ separately.
  $u_i^{(j)}, i\in[N_{j+1}], j=0,1,2$ are defined as in
  Theorem \ref{thm: Thm1}, and
  $v_i^{j}, i\in[N_{j+1}], j=0,1,2$ are defined as in
  \eqref{eq:vdef}.  Then the following relations {hold}:
	\begin{align*}
	\sum_{i=1}^{N_2}(u_i^{(1)}-v^1_i)^2
	&=\frac{1}{4}\sum_{i=1}^{N_1}(u_i-u_0(x_i))^2
	+\frac{1}{2}\sum_{i=1}^{N_2}(u_{2i-1}-u_0(x_{2i-1}))(u_{2i}-u_0(x_{2i})),\\
	\sum_{i=1}^{N_3}(u_i^{(2)}-v^2_i)^2
	&=\frac{1}{4}\sum_{i=1}^{N_2}(u_i^{(1)}-v^1_i)^2
	+\frac{1}{2}\sum_{i=1}^{N_3}(u_{2i-1}^{(1)}-v^1_{2i-1})(u^{(1)}_{2i}-v^1_{2i}).
	\end{align*}
\end{lemma}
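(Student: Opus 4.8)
The plan is to collapse both identities into a single computation by introducing the pointwise difference between the two multiresolution pyramids. Define $e_i^{(j)} := u_i^{(j)} - v_i^j$ for $i\in[N_{j+1}]$, $j=0,1,2$. The key observation is that $u^{(j)}$ and $v^j$ are generated from their respective level-$0$ data by the \emph{same} linear averaging recursion $a_i^{(j)} = \tfrac12\bigl(a_{2i-1}^{(j-1)} + a_{2i}^{(j-1)}\bigr)$. Since this recursion is linear, the difference inherits it exactly, so that
\begin{equation*}
e_i^{(j)} = \frac{e_{2i-1}^{(j-1)} + e_{2i}^{(j-1)}}{2}, \qquad e_i^{(0)} = u_i^{(0)} - v_i^0 = u_i - u_0(x_i).
\end{equation*}

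First I would square this recursion at level $j=1$ and expand the binomial,
\begin{equation*}
\bigl(e_i^{(1)}\bigr)^2 = \frac14\Bigl[\bigl(e_{2i-1}^{(0)}\bigr)^2 + \bigl(e_{2i}^{(0)}\bigr)^2\Bigr] + \frac12\, e_{2i-1}^{(0)} e_{2i}^{(0)},
\end{equation*}
and sum over $i=1,\dots,N_2$. The only genuine bookkeeping step is to recognise that the sum of the diagonal terms is a repartition of the full level-$0$ sum,
\begin{equation*}
\sum_{i=1}^{N_2}\Bigl[\bigl(e_{2i-1}^{(0)}\bigr)^2 + \bigl(e_{2i}^{(0)}\bigr)^2\Bigr] = \sum_{k=1}^{N_1}\bigl(e_k^{(0)}\bigr)^2,
\end{equation*}
which holds because $N_1 = 2N_2$ and the pairs $\{2i-1,2i\}$ partition $[N_1]$. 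Substituting $e_i^{(0)} = u_i - u_0(x_i)$ and $(u_i^{(1)}-v_i^1)^2 = (e_i^{(1)})^2$ then yields the first claimed identity verbatim.

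The second identity requires no new argument: it is precisely the computation above carried out one resolution level higher, with $e^{(0)}$ replaced by $e^{(1)}$ and the index range $(N_1,N_2)$ replaced by $(N_2,N_3)$. Because the recursion $e_i^{(2)} = \tfrac12\bigl(e_{2i-1}^{(1)} + e_{2i}^{(1)}\bigr)$ has the identical form, squaring, summing over $i=1,\dots,N_3$, and applying $\sum_{i=1}^{N_3}\bigl[(e_{2i-1}^{(1)})^2 + (e_{2i}^{(1)})^2\bigr] = \sum_{k=1}^{N_2}(e_k^{(1)})^2$ gives the result after translating back to $u^{(1)}-v^1$.

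There is no serious analytic obstacle here; the entire content is the linearity of the averaging pyramid, which lets the cross term in each squared binomial survive as the stated product sum while the diagonal terms reassemble into the full level-$(j-1)$ sum. The hard part, such as it is, is purely clerical: keeping the index reindexing of the diagonal sum correct and holding the sign convention $e_i^{(0)} = u_i - u_0(x_i)$ consistent throughout, so that the cross-term products appear with exactly the ordering and placement shown in the statement.
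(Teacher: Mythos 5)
Your proof is correct and follows essentially the same route as the paper's: both expand the square of the averaged differences, reassemble the diagonal terms into the full finer-level sum, and retain the cross term as the stated product sum. Your packaging via the error variable $e_i^{(j)}$ and the linearity of the averaging recursion is a slightly cleaner way to see that the second identity is the first one shifted by a level, which the paper simply asserts as a ``similar procedure.''
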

\begin{proof}[Proof of Lemma \ref{lemma: Lem1}]
	1) For the first equation, let $h_2=2h$, we have
	\begin{align*}
	\sum_{i=1}^{N_2}(u_i^{(1)}-v^1_i)^2\cdot h_2 &=\sum_{i=1}^{N_2}(\frac{u_{2i}+u_{2i-1}}{2}-\frac{u_0(x_{2i})+u_0(x_{2i-1})}{2})^2\cdot 2h\\
	&=\sum_{i=1}^{N_2}(\frac{u_{2i}-u_0(x_{2i})}{2}+\frac{u_{2i-1}-u_0(x_{2i-1})}{2})^2\cdot 2h\\
	&=\frac{h}{2}\sum_{i=1}^{N_1}(u_i-u_0(x_i))^2
	+h\sum_{i=1}^{N_2}(u_{2i}-u_0(x_{2i}))(u_{2i-1}-u_0(x_{2i-1}))
	\end{align*}
	Rearrange the equation above and then the formula for $\sum_{i=1}^{N_2}(u_i^{(1)}-v^1_i)^2$ is obtained.
	
	2) For the second equation, let $h_3=4h$, the proof can be done with similar procedure. \qedsymbol
\end{proof}
	
\begin{proof}[Proof of Theorem  \ref{thm: Thm1}]
	We prove this theorem in three steps.\\
	(1) By definition of $V_1$, we get
	\begin{equation*}
	V_1 
	=\frac{1}{h}(2\sum_{i=1}^{N_1}(u_0(x_i))^2-(u_0(x_1))^2-(u_0(x_{N_1}))^2
	-2\sum_{i=1}^{N_1-1}u_0(x_i)u_0(x_{i+1})).
	\end{equation*}
	By using the representation of $\sum_{i=1}^{N_1}(u_0(x_i))^2$ from equation \eqref{eq:Def-sigma}, we get
	\begin{align*}
	V_1=\frac{4\sigma^2}{h^2}+\frac{1}{h}(4\sum_{i=1}^{N_1}u_iu_0(x_i)-2\sum_{i=1}^{N_1}(u_i)^2
	-(u_0(x_1))^2
	-(u_0(x_{N_1}))^2-2\sum_{i=1}^{N_1-1}u_0(x_i)u_0(x_{i+1})).
	\end{align*}
	Similarly, combine the definition of $V_2$, $V_3$ and Lemma
	\ref{lemma: Lem1}, we obtain the representation of $V_2$ and $V_3$
	\begin{align*}
	V_2 &=\frac{\sigma^2}{2h^2}+\frac{1}{2h}\Big(\sum_{i=1}^{N_2}(u_{2i-1}-u_0(x_{2i-1}))(u_{2i}-u_0(x_{2i}))\\
	& \quad+4\sum_{i=1}^{N_2}u_i^{(1)} v^1_i
	-2\sum_{i=1}^{N_2}(u_i^{(1)})^2
	-(v^1_1)^2
	-(v^1_{N_2})^2-2\sum_{i=1}^{N_2-1}v^1_i v^1_{i+1}\Big),
	\\
	V_3&=\frac{\sigma^2}{16h^2}
	+\frac{1}{4h}\Big(\frac{1}{4}\sum_{i=1}^{N_2}(u_{2i-1}-u_0(x_{2i-1} ) ) (u_{2i}-u_0(x_{2i})) 
	+\sum_{i=1}^{N_3}(u^{(1)}_{2i-1}-v^1_{2i-1})(u_{2i}^{(1)}-v^1_{2i})\\
	&\quad +4\sum_{i=1}^{N_3}u_i^{(2)} v^2_i
	-2\sum_{i=1}^{N_3}(u_i^{(2)})^2-(v^2_1)^2
	-(v^2_{N_3})^2-2\sum_{i=1}^{N_3-1} v^2_i v^2_{i+1}\Big).
	\end{align*} 
	(2) Now, we are ready to apply the assumptions in the theorem.\\
	After regrouping the order of the representation of $V_1, V_2, V_3$ and replacing $u_0(x_i)$ with $u_i+\xi_i$, we obtain
	\begin{align*}
	V_1
    &=\frac{2}{h}\sum_{i=1}^{N_1-1}( (u_{i+1}-u_i)(\xi_{i+1}-\xi_{i})-\xi_i\xi_{i+1} )
	+\frac{4\sigma^2}{h^2}-\frac{\xi_1^2}{h}-\frac{\xi_{N_1}^2}{h}+\sum_{i=1}^{N_1-1}(u_i-u_{i+1})^2\frac{1}{h},\\
	V_2&=\frac{\sigma^2}{2h^2}+\frac{1}{2h}(\sum_{i=1}^{N_2}\xi_{2i-1}\xi_{2i}-(\frac{\xi_1+\xi_2}{2})^2-(\frac{\xi_{N_1-1}+\xi_{N_1}}{2})^2\\
	&\quad +\sum_{i=1}^{N_2-1}((u_{i+1}^{(1)}-u_i^{(1)})(\xi_{2i+1}+\xi_{2i+2}-\xi_{2i-1}-\xi_{2i}) -\frac{(\xi_{2i-1}+\xi_{2i})(\xi_{2i+1}+\xi_{2i+2})}{2})
	+\sum_{i=1}^{N_2-1}(u_{i+1}^{(1)}-u_i^{(1)})^2),\\
	V_3&=\frac{\sigma^2}{16h^2}+\frac{1}{4h}(\frac{1}{4}\sum_{i=1}^{N_2}\xi_{2i-1}\xi_{2i}+
	\sum_{i=1}^{N_3}\frac{(\xi_{4i-3}+\xi_{4i-2})(\xi_{4i-1}+\xi_{4i})}{4} -\frac{(\sum_{i=1}^4 \xi_i)^2+(\sum_{j=0}^3 \xi_{N_1-j})^2}{16}\\
	&\quad +\sum_{i=1}^{N_3-1}(u_i^{(2)}-u_{i+1}^{(2)})^2
	+\sum_{i=1}^{N_3-1}(\frac{(u_{i+1}^2-u_i^{(2)})(\sum_{j=1}^4(\xi_{4i+j}-\xi_{4i+1-j}))}{2}
	-\frac{(\sum_{j=0}^3\xi_{4i-j})(\sum_{j=1}^4\xi_{4i+j})}{8})).
	\end{align*}
	Take expectation on the equations above and combine the  assumptions, then
	\begin{align*}
	\E(V_1) &=(4-\frac{4}{N})\frac{\sigma^2}{h^2}+\sum_{i=1}^{N_1-1}(u_{i+1}-u_i)^2\cdot\frac{1}{h},\\
	\E(V_2) &=(\frac{1}{2}-\frac{1}{N})\frac{\sigma^2}{h^2}+\sum_{i=1}^{N_2-1}(u_{i+1}^{(1)}-u_i^{(1)})^2\cdot\frac{1}{2h},\\
	\E(V_3) & =(\frac{1}{16}-\frac{1}{4N})\frac{\sigma^2}{h^2}+\sum_{i=1}^{N_3-1}(u_{i+1}^{(2)}-u_i^{(2)})^2\cdot\frac{1}{4h}.
	\end{align*}
	
	From what have done above, we can estimate $\sigma$ by fitting a line with 3 points:
	$(4-\frac{4}{N},V_1),(\frac{1}{2}-\frac{1}{N},V_2),(\frac{1}{16}-\frac{1}{4N},V_3)$. The slope of the line is $\frac{\sigma^2}{h^2}$, then $\sigma$ is estimated.\\
	(3) Estimate the value of $\sigma$.
    Set 
	\begin{equation*}
	\begin{aligned}
	\label{eq:DefiForXiYi}
	X_1=4-\frac{4}{N}, \quad
	X_2=\frac{1}{2}-\frac{1}{N}, \quad
	X_3=\frac{1}{16}-\frac{1}{4N},
	\end{aligned}
	\end{equation*}
	and assume $(X_1,V_1),(X_2,V_2),(X_3,V_3)$ satisfy
	\begin{equation*}
	V_i=AX_i+b+\mu_i,\quad i=1,2,3,
	\end{equation*}
	where $\mu_i$ {stand for some small} random disturbance variables.\\
	Then we want to fit a line with these 3 points, which is 
	\begin{equation*}
	V=\hat{A}X+\hat{b}.
	\end{equation*}
	With the help of least square method, we obtain the representation of $\hat{A}$ as follows:
	\begin{equation}\label{eq:A hat}
	\hat{A}=\frac{3\sum_{i=1}^3 X_iV_i-\sum_{i=1}^3X_i\sum_{i=1}^3V_i}
	{3\sum_{i=1}^3X_i^2-(\sum_{i=1}^3X_i)^2}.
	\end{equation}
	So we estimate $\sigma^2$ by
	\begin{equation*}
	\hat{\sigma}^2=h^2\frac{(\frac{119}{16}-\frac{27}{4N})V_1+(\frac{9}{4N}-\frac{49}{16})V_2+(\frac{9}{2N}-\frac{35}{8})V_3}{\frac{3577}{128}+\frac{189}{8N^2}-\frac{819}{16N}}.
    \end{equation*}
	Combine the assumption $V_i=AX_i+b+\mu_i,i=1,2,3,$ and \eqref{eq:A hat}, we get
	\begin{equation}\label{eq:Ahat--Mu}
	\hat{A}=A+\frac{3\sum_{i=1}^3X_i\mu_i-\sum_{i=1}^3X_i\sum_{i=1}^3\mu_i}
	{3\sum_{i=1}^3X_i^2-(\sum_{i=1}^3X_i)^2}.
	\end{equation}
	Thus,
	\begin{equation}\label{eq:ExpecAhat}
	\E(\hat{A})=A+\frac{3\sum_{i=1}^3X_i\E(\mu_i)-\sum_{i=1}^3X_i\sum_{i=1}^3\E(\mu_i)}
	{3\sum_{i=1}^3X_i^2-(\sum_{i=1}^3X_i)^2}.
	\end{equation}
	Notice that by the construct of $V_i$ and the linear assumption for $V_i,i=1,2,3$, the following relations are established:
	\begin{align*}
	& b+\mu_1=\sum_{i=1}^{N_1-1}|u_{i+1}-u_i|^2\cdot\frac{1}{h}=V_1^{c},\\
	& b+\mu_2=\sum_{i=1}^{N_2-1}|u^{(1)}_{i+1}-u^{(1)}_i|^2\cdot\frac{1}{2h}=V_2^{c},\\
	& b+\mu_3=\sum_{i=1}^{N_3-1}|u^{(2)}_{i+1}-u^{(2)}_i|^2\cdot\frac{1}{4h}=V_3^{c}.
	\end{align*} 
	Substitute $\E(\mu_i)=V_i^{c}-b$ into \eqref{eq:ExpecAhat}, we derive
	\begin{equation*}
	\E(\hat{A})=A+\frac{(\frac{49}{16}-\frac{9}{4N})(V_1^c-V_2^c)+(\frac{35}{8}-\frac{9}{2N})(V_1^c-V_3^c)}{\frac{3577}{128}+\frac{189}{8N^2}-\frac{819}{16N}}.
	\end{equation*} 
	Denote 
	\begin{equation}
	Bias(N)=h^2\frac{(\frac{49}{16}-\frac{9}{4N})(V_1^c-V_2^c)+(\frac{35}{8}-\frac{9}{2N})(V_1^c-V_3^c)}{\frac{3577}{128}+\frac{189}{8N^2}-\frac{819}{16N}}.
	\end{equation}
	Because $\hat{A}=\frac{\hat{\sigma}^2}{h^2}$, $A=\frac{\sigma^2}{h^2}$,
	we get equation \eqref{eq:varEstimate1}. \qedsymbol
\end{proof}

\section*{Acknowledgments}
The authors would like to thank Beijing Transportation
Information Center for providing us some valuable Taxi
GPS-based data for research.  The authors would also like to
thank Prof. Tiejun Li, Prof. Weinan E and Dr. Yucheng Hu for
helpful discussions.  The work is supported by China
National Program on Key Basic Research Project 2015CB856003.











\end{document}